\setlist[itemize]{leftmargin=1.5em}
\setlist[enumerate]{leftmargin=3em}
\newcommand{\NN}{\mathbb{N}}
\newcommand{\RR}{\mathbb{R}}
\newcommand{\bbE}{\mathbb{E}}
\newcommand{\bbP}{P} 
\newcommand{\bbQ}{Q} 
\newcommand{\indic}{\mathds{1}}
\newcommand{\sigmoid}{\texttt{sigmoid}}
\newcommand{\cB}{\mathcal{B}}
\newcommand{\cD}{\mathcal{D}}
\newcommand{\cX}{\mathcal{X}}
\newcommand{\cY}{\mathcal{Y}}
\newcommand{\cZ}{\mathcal{Z}}
\def \REL{R^{\mathrm{cal}}}
\def \GRP{R^{\mathrm{sha}}}
\def \MSE{\mathrm{MSE}}
\def \F1{\textrm{F}_1}
\def \Fbeta{\textrm{F}_\Fbeta}
\newtheorem{theorem}{Theorem}
\newtheorem{corollary}[theorem]{Corollary}
\newtheorem{lemma}[theorem]{Lemma}
\newtheorem{definition}{Definition} 
\newtheorem{remark}{Remark}
\newtheorem{problem}{Problem}
\let\emph\relax 
\DeclareTextFontCommand{\emph}{\color{Maroon}\em}
\newcommand{\hopt}{h^*_{f,P}}
\newcommand{\Dtrain}{\cD_P}
\newcommand{\Dtest}{\cD_Q}
\newcommand{\hg}{\hat{g}}
\newcommand{\hh}{\hat{h}}
\newcommand{\hw}{\hat{w}}
\newcommand{\binsch}{\cB}
\newcommand{\Pbalance}{\Phi_{\mathrm{balance}}}
\newcommand{\Papprox}{\Phi_{\mathrm{approx}}}
\newcommand{\Pratio}{\Phi_{\mathrm{ratio}}}
\newcommand{\wmax}{w^*_{\max}}
\newcommand{\wmin}{w^*_{\min}}
\newcommand{\hp}{\hat{p}}
\newcommand{\hq}{\hat{q}}
\newcommand{\Source}{\textsc{Source}}
\newcommand{\Target}{\textsc{Target}}
\newcommand{\Composite}{\textsc{Composite}}
\newcommand{\Labelshift}{\textsc{Label-Shift}}
\title{Optimal Probability Recalibration}
\author{Zeyu Sun, Dogyoon Song and Alfred O. Hero}
\date{}
\begin{document}

\maketitle

\begin{abstract}
    Recalibrating probabilistic classifiers is vital for enhancing the reliability and accuracy of predictive models. Despite the development of numerous recalibration algorithms, there is still a lack of a comprehensive theory that integrates calibration and sharpness (which is essential for maintaining predictive power). In this paper, we introduce the concept of minimum-risk recalibration within the framework of mean-squared-error (MSE) decomposition, offering a principled approach for evaluating and recalibrating probabilistic classifiers. Using this framework, we analyze the uniform-mass binning (UMB) recalibration method and establish a finite-sample risk upper bound of order $\tilde{O}(B/n + 1/B^2)$ where $B$ is the number of bins and $n$ is the sample size. By balancing calibration and sharpness, we further determine that the optimal number of bins for UMB scales with $n^{1/3}$, resulting in a risk bound of approximately $O(n^{-2/3})$. Additionally, we tackle the challenge of label shift by proposing a two-stage approach that adjusts the recalibration function using limited labeled data from the target domain. Our results show that transferring a calibrated classifier requires significantly fewer target samples compared to recalibrating from scratch. We validate our theoretical findings through numerical simulations, which confirm the tightness of the proposed bounds, the optimal number of bins, and the effectiveness of label shift adaptation. 

\end{abstract}

\tableofcontents

\section{Introduction}

Generating reliable probability estimates alongside accurate class labels is crucial in classification tasks. 
A probabilistic classifier is considered "well calibrated" when its predicted probabilities closely align with the empirical frequencies of the corresponding labels \cite{dawid1982well}. 
Calibration is highly desirable, particularly in high-stakes applications such as meteorological forecasting \cite{murphy1973new, murphy1977reliability, degroot1983comparison, gneiting2005weather}, econometrics \cite{gneiting2007strictly}, personalized medicine \cite{jiang2012calibrating, huang2020tutorial}, and natural language processing \cite{nguyen2015posterior, card2018importance, desai2020calibration, zhao2021calibrate}. 
Unfortunately, many machine learning algorithms lack inherent calibration \cite{guo2017calibration}.

To tackle this challenge, various methods have been proposed for designing post hoc recalibration functions. 
These functions are used to assess calibration error \cite{zhang2020mix, popordanoska2022consistent, gruber2022better, blasiok2022unifying}, detect miscalibration \cite{lee2022t}, and provide post-hoc recalibration \cite{zadrozny2001obtaining, zadrozny2002transforming, guo2017calibration, vaicenavicius2019evaluating, kumar2019verified, gupta2021top}.
Despite the rapid development of recalibration algorithms, there is still a lack of a comprehensive theory that encompasses both calibration and sharpness (retaining predictive power) from a principled standpoint. 
Furthermore, existing methods often rely on diverse calibration metrics \cite{gruber2022better,blasiok2022unifying}, and the selection of hyperparameters is often based on heuristic approaches \cite{roelofs2022mitigating, gupta2021withoutsplitting} without rigorous justifications. 
This highlights the need for identifying an optimal metric to evaluate calibration, which can facilitate the development of a unified theory and design principles for recalibration functions.

In addition, the deployment of machine learning models to data distributions that differ from the training phase is increasingly common. 
These distribution shifts can occur naturally due to factors such as seasonality or other variations, or they can be induced artificially through data manipulation methods such as subsampling or data augmentation. 
Distribution shifts pose challenges to the generalization of machine learning models.
Therefore, it becomes necessary to adapt the trained model to these new settings. 
One significant category of distribution shifts is label shift, where the marginal probabilities of the classes differ between the training and test sets while the class conditional feature distributions remain the same. 
Although adjustments can be made to the probabilistic predictions using Bayes' rule \cite{elkan2001foundations} when accurate estimates of the marginal probabilities are available, the problem of post hoc recalibration under label shift remains a challenge \cite{alexandari2020maximum,garg2020unified, tian2020posterior,podkopaev2021distribution}.

In this paper, we aim to address these issues in a twofold manner. Firstly, we develop a unified framework for recalibration that incorporates both calibration and sharpness in a principled manner. Secondly, we propose a composite estimator for recalibration in the presence of label shift that converges to the optimal recalibration. Our framework enables the adaptation of a classifier to the label-shifted domain in a sample-efficient manner.

\subsection{Related work}

\paragraph{Recalibration algorithms.}
Recalibration methods can be broadly categorized into parametric and nonparametric approaches. 
Parametric methods model the recalibration function in a parametric form and estimate the parameters using calibration data. 
Examples of parametric methods include Platt scaling \cite{platt1999probabilistic}, temperature scaling \cite{guo2017calibration}, Beta calibration \cite{kull2017beyond}, and Dirichlet calibration \cite{kull2019beyond}. 
However, it has been reported that scaling methods are often less calibrated than supposed, and moreover, it can be challenging to accurately quantify the degree of miscalibration \cite{kumar2019verified}.
In contrast, nonparametric recalibration methods do not assume a specific parametric form for the recalibration function. 
These methods include histogram binning \cite{zadrozny2001obtaining}, isotonic regression \cite{zadrozny2002transforming}, kernel density estimation \cite{zhang2020mix, popordanoska2022consistent}, splines \cite{gupta2020splines}, Gaussian processes \cite{wenger2020non}, among others.
In addition to parametric and nonparametric methods, hybrid approaches have also been proposed. 
For instance, Kumar et al. \cite{kumar2019verified} combine nonparametric histogram binning with parametric scaling to reduce variance and improve recalibration performance.

\paragraph{Histogram binning method.}
Histogram binning methods are widely used for recalibration due to their simplicity and adaptability.
The binning schemes can be pre-specified (e.g., equal-width binning \cite{guo2017calibration}), data-dependent (e.g., uniform-mass binning \cite{zadrozny2001obtaining}), or algorithm-induced \cite{zadrozny2002transforming}. 
When selecting a binning scheme, it is crucial to consider the trade-off between approximation and estimation. 
Coarser binning reduces estimation error (variance), leading to improved calibration, but at the expense of increased approximation error (bias), which diminishes sharpness. 
Thus, determining the optimal binning scheme and hyperparameters, such as the number of bins ($B$), remains an active area of research. 
\cite{naeini2015obtaining} proposes a Bayesian binning method, but verifying the priors is often challenging.
\cite{roelofs2022mitigating} suggests choosing the largest $B$ that preserves monotonicity, which is heuristic and computationally inefficient.
\cite{gupta2021withoutsplitting} offers a heuristic for choosing the largest $B$ subject to a calibration constraint, lacking a quantitative characterization of sharpness.

\paragraph{Adaptation to label shift.}
Label shift presents a challenge in generalizing models trained on one distribution (source) to a different distribution (target). 
As such, adapting to label shift has received considerable attention in the literature \cite{elkan2001foundations, saerens2002adjusting,lipton2018detecting,azizzadenesheli2018regularized,alexandari2020maximum,garg2020unified}. 
In practical scenarios, it is common to encounter model miscalibration and label shift simultaneously \cite{tian2020posterior,podkopaev2021distribution}. 
Empirical observations have highlighted the crucial role of probability calibration in label shift adaptation \cite{alexandari2020maximum,esuli2020critical}, which is justified by subsequent theories \cite{garg2020unified}.
However, to the best of our knowledge, there has been no prior work that specifically addresses the recalibration with a limited amount of labeled data from the target distribution.

\subsection{Contributions}
This paper contributes to the theory of recalibration across three key dimensions.

Firstly, we develop a comprehensive theory for recalibration in binary classification by adopting the mean-squared-error (MSE) decomposition framework commonly used in meteorology and space weather forecasting \cite{brier1950verification,murphy1967verification,crown2012validation,sun2022predicting}. 
Our approach formulates the probability recalibration problem as the minimization of a specific risk function, which can be orthogonally decomposed into calibration and sharpness components. 

Secondly, utilizing the aforementioned framework, we derive a rigorous upper bound on the finite-sample risk for uniform-mass binning (UMB) (Theorem~\ref{Thm:risk_bound}). 
Furthermore, we minimize this risk bound and demonstrate that the optimal number of bins for UMB, balancing calibration and sharpness, scales on the order of $n^{1/3}$, yielding the risk bound of order $n^{-2/3}$, where $n$ denotes the sample size. 

Lastly, we address the challenge of recalibrating classifiers for label shift when only a limited labeled sample from the target distribution is available, a challenging situation for a direct recalibration approach.
We propose a two-stage approach: first recalibrating the classfier on the abundant source-domain data, and then transfering it to the label-shifted target domain.
We provide a finite-sample guarantee for the risk of this composite procedure (Theorem \ref{THM:MAIN}). 
Notably,
to control the risk under $\varepsilon$, our approach requires a much smaller sample size from the target distribution than a direct recalibration on the target sample ($\Omega(\varepsilon^{-1})$ vs. $\Omega(\varepsilon^{-3/2})$, cf. Remark \ref{remark:economic}).

\subsection{Organization}
This paper is organized as follows. 
In Section \ref{sec:preliminaries}, we introduce notation and provide an overview of calibration and sharpness. 
Section \ref{sec:optimal_recalibration} introduces the notion of minimum-risk recalibration by defining the recalibration risk that takes into account both calibration and sharpness. 
In Section \ref{sec:recalibration_binning}, we describe the uniform-mass histogram binning method for recalibration and provide a risk upper bound with rate analysis. 
We extend our approach to handle label shift in Section \ref{sec:cal_ls}. 
To validate our theory and framework, we present numerical experiments in Section \ref{sec:experiments}. 
Finally, in Section \ref{sec:discussion}, we conclude the paper with a discussion and propose future research directions.

\input{contents/02_theory}
\input{contents/03_histogram_binning}
\section{Recalibration under label shift}
\label{sec:cal_ls}

This section extends the results from Section \ref{sec:recalibration_binning} to address label shift. 
In Section \ref{sec:problem_revist}, we introduce the label shift assumption (Definition \ref{defn:label_shift}) and reframe the recalibration problem accordingly. 
We show that the optimal recalibration function in this context can be expressed as a composition of the optimal recalibration function (cf. Section \ref{sec:optimal_recalibration}) and a shift correction function. 
Building on this observation, we propose a two-stage estimator in Section \ref{sec:two_stage_method}, where each stage estimates one of the component functions. 
The composite estimator's overall performance is supported by theoretical guarantees.

\subsection{Revisiting the problem formulation}\label{sec:problem_revist}
Let $P$ and $Q$ denote the probability measures of the source and the target domains, respectively.
We assume $P$ and $Q$ satisfy the label shift assumption defined below.
\begin{definition}[Label shift]\label{defn:label_shift}
    Probability measures $P$ and $Q$ are said to satisfy the \emph{label shift} assumption if the following two conditions are satisfied:
    \begin{enumerate}[label=\texttt{(B\arabic*)}]
        \item\label{assump:conditional}
        $\bbP[X\in B \mid Y=k] = \bbQ[X\in B \mid Y=k]$ for all $k \in \{0, 1\}$ and all $B \in \cB(\cX)$.
        \item\label{assump:presence}
        $\bbP[Y=1] \in (0,1)$ and $\bbQ[Y=1] \in (0,1)$.
    \end{enumerate}
\end{definition}
According to Condition \ref{assump:conditional}, the class conditional distributions remain the same, while the marginal distribution of the classes may change. 
Condition \ref{assump:presence} requires all classes to be present in the source population, 
which is a standard regularity assumption in the discussion of label shift \cite{lipton2018detecting,garg2020unified}; it also posits the presence of every class in the target population.


\paragraph{Optimal recalibration under label shift.} 
Under the label shift assumption between $P$ and $Q$, we define the label shift correction function $g^*: \cZ \to \cZ$ such that
\begin{equation}\label{eqn:optimal_g}
    g^*(z) = \frac{w^*_1 z}{ w^*_1 z + w^*_0 (1-z)}
    \qquad\text{where}\qquad
    w^*_k = \frac{\bbQ[Y=k]}{\bbP[Y=k]}, ~~\forall k \in \{0,1\}.
\end{equation}
The conditional probabilities under $P$ and $Q$ can be related \cite{saerens2002adjusting} as follows:
\begin{equation}\label{eqn:relation_under_shift}
    \bbQ[Y=1 \mid X\in B] = g^*\Big( \bbP[Y=1 \mid X\in B] \Big), \qquad \forall B \in \cB(\cX).
\end{equation}

Recall that the optimal recalibration function for a predictor $f: \cX \to \cZ$ under probability measure $P$ is defined as $\hopt(z) = \bbP[Y = 1 \mid f(X)=z]$; see \eqref{eqn:optimal_h}. 
In the presence of a label shift between $P$ and $Q$, we may write the optimal recalibration function for $f$ under $Q$ as
\begin{equation}
    h^*_{f, Q} = g^* \circ h^*_{f,P}
\end{equation}
because $
h^*_{f, Q}(z) 
        \stackrel{(a)}{=} \bbQ[Y=1 \mid f(X) = z]
        \stackrel{(b)}{=} g^*\left(\bbP[Y=1 \mid f(X) = z]\right)
        \stackrel{(c)}{=} \left( g^* \circ \hopt \right) (z),
$ where (a) and (c) follows from the definition of $h^*_{f,P}$ and (b) is due to \eqref{eqn:relation_under_shift}.

Recalling the definition of the risk $R_P(h; f)$ from \eqref{eqn:risk_recalib}, we observe that $R_Q(h^*_{f, Q}; f) = 0$, which is consistent with the risk characterization of the optimal recalibration.
Our goal is to estimate the optimal recalibration function $h^*_{f, Q} = g^* \circ h^*_{f,P}$ from data. 

\begin{problem}[Recalibration under label shift]
    Suppose that we have a measurable function $f: \cX \to \cZ$ and two IID datasets $\Dtrain = (x_i, y_i)_{i=1}^{n_P} \sim P$ and $\Dtest = (x'_i, y'_i)_{i=1}^{n_Q} \sim Q$.
    The goal of \emph{recalibration under label shift} is to estimate $\hat{h} \approx h^*_{f,Q}$ using $f$, $\Dtrain$ and $\Dtest$.
\end{problem}

\begin{remark}
    The source (training) dataset $\Dtrain$ may not be accessible due to privacy protections, proprietary data, or practical constraints, as is often the case when recalibrating a pre-trained black box classifier to new data. 
    In these cases, it suffices to have estimates of the recalibration function $\hopt$ and the marginal probabilities $\bbP[Y=k], ~k \in \{0,1\}$ under $P$, for our method and analysis.
\end{remark}

\subsection{Two-stage recalibration under label shift}\label{sec:two_stage_method}

\paragraph{Method.} 
We propose a composite estimator of $h^*_{f, Q} = g^* \circ h^*_{f,P}$, which comprises two estimators $\hg \approx g^*$ and $\hh_P \approx h^*_{f,P}$. 
Here we describe a procedure to produce this composite estimator. 

\begin{enumerate}
    \item 
    Use $\Dtrain$ to construct $\hh_P: \cZ \to \cZ$, the estimated recalibration function \eqref{eqn:recalib_ftn} (for $f$ under $P$).  
    \item
    Use $\Dtrain$ and $\Dtest$ to construct $\hg: \cZ \to \cZ$ such that
    \begin{equation}\label{eqn:g_hat}
        \hg(z) = \frac{\hw_1 z}{ \hw_1 z + \hw_0 (1-z)}
        \qquad\text{where}\qquad
        \hw_k = \frac{\hat{Q}[Y=k]}{\hat{P}[Y=k]}, ~~\forall k \in \{0,1\},
    \end{equation}
    where $\hat{P}[Y=k] := \frac{1}{|\Dtrain|} \sum_{i=1}^{|\Dtrain|} \indic[y_i=k]$ and $\hat{Q}[Y=k] := \frac{1}{|\Dtest|} \sum_{i=1}^{|\Dtest|} \indic[y'_i=k]$ are the empirical estimates of the class marginal probabilities.
    %
    \item 
    Let 
    \begin{align}
        \hh_Q = \hg \circ \hh_P. \label{eqn:h_hat}
    \end{align}
\end{enumerate}

Note that the recalibration estimator $\hat{h}_P$ (Step 1) remains the same with that in Section \ref{sec:est_hp}. 
Furthermore, the shift correction estimator $\hat{g}$ (Step 2) is a plug-in estimator of the label shift correction function $g^*$ in \eqref{eqn:relation_under_shift} based on the estimated weights, $\hw_1$ and $\hw_0$. 




\paragraph{Theory.} 
We present a recalibration risk upper bound for the proposed two-stage estimator. 
We let $p_k := \bbP[Y=k]$, $q_k := \bbQ[Y=k]$, and $w^*_k = \frac{q_k}{p_k}$ for $k \in \{0,1\}$. 
Moreover, we let $p_{\min} := \min_{k} p_k$, $q_{\min} := \min_{k} p_k$, $\wmin := \min_{k} w^*_k$ and $\wmax := \max_{k} w^*_k$.

\begin{restatable}[Convergence of $\hat{h}_Q$]{theorem}{main}
\label{THM:MAIN}
    Let $P, Q$ be probability measures and let $f: \cX \to \cZ$ be a measurable function. 
    Let $\Dtrain \sim P$ be an IID sample of size $n_P$ and $\Dtest \sim Q$ be an IID sample of size $n_Q$. 
    Suppose that Assumptions \ref{assump:conditional} \& \ref{assump:presence} hold.  
    Let $\binsch$ be the UMB scheme induced by $\Dtrain$. 
    Let $\hat{h}_P = \hat{h}_{P,\binsch}$ be the recalibration function \eqref{eqn:recalib_ftn} based on $\binsch$, and 
    let $\hat{g}$ denote the shift correction function as defined in \eqref{eqn:g_hat}.
    Then
    \begin{equation}\label{eqn:master_shift}
        R_Q \big( \hat{g} \circ \hat{h}_P \big)
            \leq 2 \left\{ \left( \frac{ \rho_0 - \rho_1 }{ \rho_0 + \rho_1 } \right)^2 + \frac{{\wmax}^3}{{\wmin}^2} \cdot R_P\big( \hat{h}_P; f \big) \right\}
            \qquad\text{where}\qquad
            \rho_k := \frac{\hw_k}{w^*_k}, ~~k \in \{0,1\}.
    \end{equation}
    Furthermore, suppose that \ref{assump:pos}, \ref{assump:monotone} \& \ref{assump:smooth} hold. 
    Then there exists a universal constant $c > 0$ such that for any $\delta \in (0, 1)$, if 
    \[
        n_P \geq \max\left\{ c, ~ \frac{27}{p_{\min}} \right\} \cdot |\binsch| \log \left( \frac{4|\binsch|}{\delta} \right)
        \qquad\text{and}\qquad
        n_Q \geq \frac{27}{q_{\min}} \log \left( \frac{16}{\delta} \right),
    \]
    then with probability at least $1 - \delta$,
    \begin{equation}\label{eqn:risk_upper.A}
    \begin{aligned}
        R_Q \big( \hat{g} \circ \hat{h}_P \big)
            &\leq 
                2 \frac{{\wmax}^3}{{\wmin}^2} \cdot 
                \left\{ \left( \sqrt{\frac{1}{2( \lfloor n_P/|\binsch| \rfloor - 1)} \log \left(\frac{8 |\binsch|}{\delta}\right) } + \frac{1}{\lfloor n_P/|\binsch| \rfloor}\right)^2 + \frac{8K^2}{|\binsch|^2} \right\}     \\
            &\qquad 
                +54 \max \left\{ \frac{1}{ p_{\min} \cdot n_P }, ~\frac{1}{ q_{\min} \cdot n_Q }  \right\} \cdot \log \left( \frac{16}{\delta} \right) .
    \end{aligned}
    \end{equation}
\end{restatable}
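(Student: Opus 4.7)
The plan is to establish the deterministic inequality \eqref{eqn:master_shift} first, and then upgrade it to \eqref{eqn:risk_upper.A} by invoking Theorem \ref{Thm:risk_bound} together with concentration of the empirical class proportions.

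For \eqref{eqn:master_shift}, I will start from the identity $h^*_{f,Q} = g^* \circ \hopt$ established just above the theorem statement, which lets me write
\[
    R_Q(\hat{g} \circ \hat{h}_P; f) = \EQ\!\left[\left(\hat{g}(\hat{h}_P(f(X))) - g^*(\hopt(f(X)))\right)^2\right].
\]
Inserting $g^*(\hat{h}_P(f(X)))$ and using $(a+b)^2 \leq 2a^2 + 2b^2$ splits the risk into a shift-estimation term $\EQ[(\hat{g}-g^*)^2(\hat{h}_P(f(X)))]$ and a recalibration-propagation term $\EQ[(g^*(\hat{h}_P(f(X))) - g^*(\hopt(f(X))))^2]$. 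For the first, I will compute $\hat{g}(z)-g^*(z)$ in closed form from \eqref{eqn:optimal_g} and \eqref{eqn:g_hat}, yielding
\[
    \hat{g}(z) - g^*(z) = \frac{w^*_0 w^*_1\, z(1-z)(\rho_1 - \rho_0)}{(\rho_1 w^*_1 z + \rho_0 w^*_0(1-z))(w^*_1 z + w^*_0(1-z))},
\]
and establish the pointwise bound $|\hat{g}(z)-g^*(z)| \le |\rho_0-\rho_1|/(\rho_0+\rho_1)$ by verifying that expanding the denominator and subtracting $(\rho_0+\rho_1)\, w^*_0 w^*_1 z(1-z)$ leaves $\rho_0 (w^*_0(1-z))^2 + \rho_1 (w^*_1 z)^2 \geq 0$. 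For the second term I will show $g^*$ is $(\wmax/\wmin)$-Lipschitz via $(g^*)'(z) = w^*_0 w^*_1/(w^*_1 z + w^*_0(1-z))^2 \le \wmax/\wmin$, and combine it with the change-of-measure inequality $\EQ[\phi(f(X))] \le \wmax\, \EP[\phi(f(X))]$ for non-negative $\phi$ (which follows from Assumption \ref{assump:conditional} via $\EQ[\phi(f(X))] = \sum_k q_k \EP[\phi(f(X)) \mid Y=k]$), obtaining the bound $(\wmax)^3/(\wmin)^2 \cdot R_P(\hat{h}_P; f)$. Combining the two pieces gives \eqref{eqn:master_shift}.

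For \eqref{eqn:risk_upper.A}, I will apply Theorem \ref{Thm:risk_bound} with failure probability $\delta/2$ to control $R_P(\hat{h}_P; f)$ under the sample-size condition on $n_P$, producing the first summand. Separately, I will apply a multiplicative Chernoff (or Bernstein) tail to each of $\hat{P}[Y=k]$ and $\hat{Q}[Y=k]$ for $k\in\{0,1\}$; under the stated conditions, a union bound over four events (each with failure probability $\delta/8$) gives, with probability at least $1-\delta/2$, bounds on $|\hat{w}_k/w^*_k - 1|$ that translate into
\[
    \left(\frac{\rho_0 - \rho_1}{\rho_0 + \rho_1}\right)^2 \le 27 \log(16/\delta) \cdot \max\!\left\{\frac{1}{n_P\, p_{\min}},\ \frac{1}{n_Q\, q_{\min}}\right\}.
\]
A final union bound with the calibration event and substitution into \eqref{eqn:master_shift} yields \eqref{eqn:risk_upper.A}.

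The main obstacle will be the sharp pointwise bound $|\hat{g}(z)-g^*(z)| \le |\rho_0-\rho_1|/(\rho_0+\rho_1)$: a crude Lipschitz argument between $\hat{g}$ and $g^*$ would introduce an unwanted dependence on the unknown $w^*_k$ inside the first summand of \eqref{eqn:master_shift}, and the algebraic cancellation above is what cleanly confines that dependence to the $R_P$-term. A secondary subtlety is using Bernstein (rather than Hoeffding) concentration for $\hat{p}_k, \hat{q}_k$, which is what yields the $1/(p_{\min} n_P)$ dependence (instead of $1/(p_{\min}^2 n_P)$) in the final bound.
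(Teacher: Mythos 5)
Your proposal is correct and follows essentially the same route as the paper's proof: the same insertion of $g^* \circ \hat{h}_P$ with $(a+b)^2 \le 2(a^2+b^2)$, the same algebraic cancellation giving the pointwise bound $|\hat{g}(z)-g^*(z)| \le |\rho_0-\rho_1|/(\rho_0+\rho_1)$, the same Lipschitz-plus-change-of-measure argument yielding the $\wmax^3/\wmin^2$ factor, and the same multiplicative Chernoff/union-bound budget ($\delta/2$ for the binning bounds, four events of $\delta/8$ for the class proportions) to reach \eqref{eqn:risk_upper.A}. The only cosmetic difference is that you invoke Theorem~\ref{Thm:risk_bound} as a black box at level $\delta/2$ rather than re-running Lemmas~\ref{LEMMA:WELL} and~\ref{LEMMA:FREE} directly, which has the same effect.
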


\begin{remark}
    Note that $\rho_k \to 1$ as $n_P, n_Q \to \infty$, and thus, the upper bound \eqref{eqn:master_shift} reduces to $2 \frac{{\wmax}^3}{{\wmin}^2} \cdot R_P\big( \hat{h}_P; f \big)$. 
    Moreover, when $P=Q$, we have $\wmin = \wmax = 1$, and this further simplifies to the recalibration risk without label shift, up to multiplicative constant $2$.
\end{remark}

\begin{remark}\label{remark:economic}
    Assume that $n_P \geq n_Q$ and the number of bins satisfies $|\binsch| \asymp n_P^{1/3}$. Then \eqref{eqn:risk_upper.A} implies 
    $R_Q \big( \hat{g} \circ \hat{h}_P; f \big) = O\big(n_P^{-2/3} + n_Q^{-1}\big)$. 
    This result indicates that  the proposed recalibration method using \eqref{eqn:h_hat} requires a significantly smaller target sample size $n_Q = \Omega(\varepsilon^{-1})$ to control the risk, as compared to $n_Q = \Omega(\varepsilon^{-3/2})$ in \eqref{eqn:complexity}.
\end{remark}

\begin{remark}[Target sample complexity]\label{Remark:unlabeled}
    When the source sample size $n_P$ is sufficiently large, we achieve a risk of $R_Q(\hat{g} \circ \hat{h}_P) = O(n_Q^{-1})$ with high probability. 
    It is important to note that in this scenario, we only utilize the labels from the target sample to address label shift. 
    Remarkably, the same rate applies when employing the algorithms proposed in \cite{lipton2018detecting,azizzadenesheli2018regularized,garg2020unified}, which solely rely on features from the target sample. 
    For a proof sketch, please refer to Appendix~\ref{sec:remark_unlabeled}.
\end{remark}







\section{Numerical experiments}
\label{sec:experiments}

In this section, we present the results of our numerical simulations conducted to validate and reinforce the theoretical findings discussed earlier. 
The simulations are based on a family of joint distributions $\cD(\pi)$ of $X$ and $Y$, where $Y \sim \textrm{Bernoulli}(\pi)$, $X \mid Y = 0 \sim N(-2, 1)$, and $X \mid Y = 1 \sim N(2, 1)$. 
We suppose the pre-trained probabilistic classifier is given as $f(x) = \sigmoid(x) := 1 / (1 + e^{-x})$.

To accommodate the limitations of space, we summarize the results in Figure \ref{fig:sim_risk}, \ref{fig:opt_B}, and Table \ref{tab:label_shift}, providing a concise overview. Detailed information about the simulation settings, implementation details, and further discussions on the results can be found in Appendix \ref{sec:addtitional_experiments}.

\begin{figure}[h!]
    \centering
    \begin{subfigure}{0.24\textwidth}
        \includegraphics[width=\textwidth]{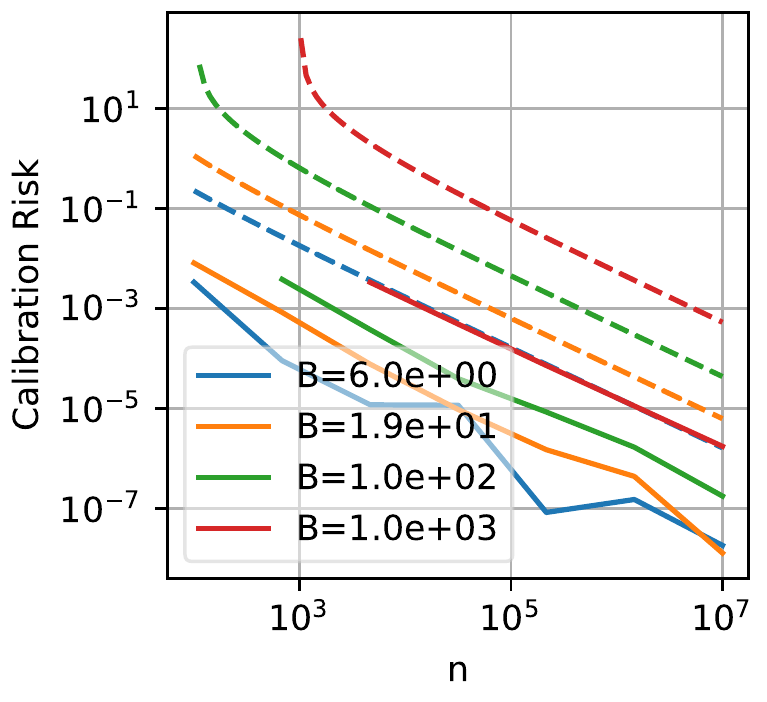}
        \caption{$\REL(\hat{h})$ vs. $n$}
    \end{subfigure}
    \begin{subfigure}{0.24\textwidth}
        \includegraphics[width=\textwidth]{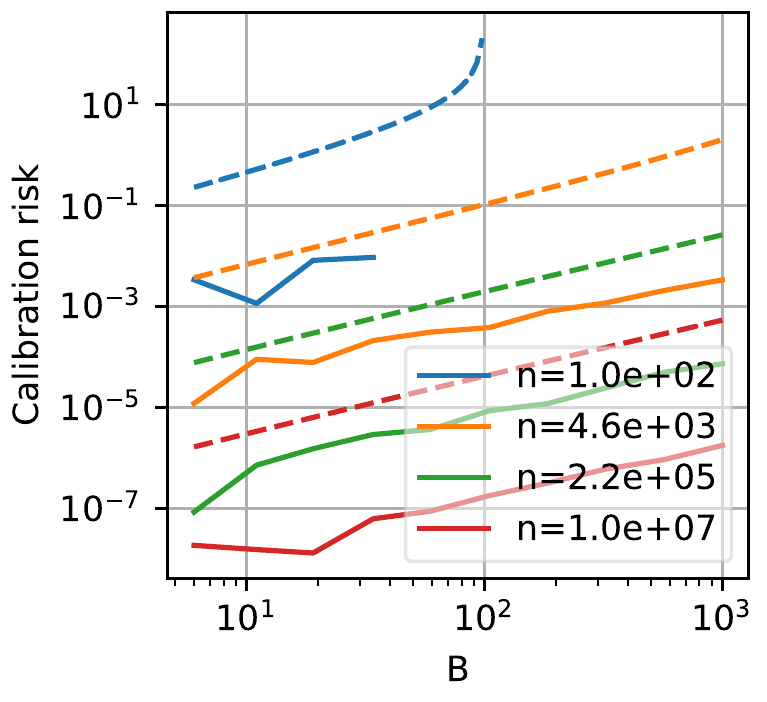}
        \caption{$\REL(\hat{h})$ vs. $B$}
    \end{subfigure}
    \begin{subfigure}{0.24\textwidth}
        \includegraphics[width=\textwidth]{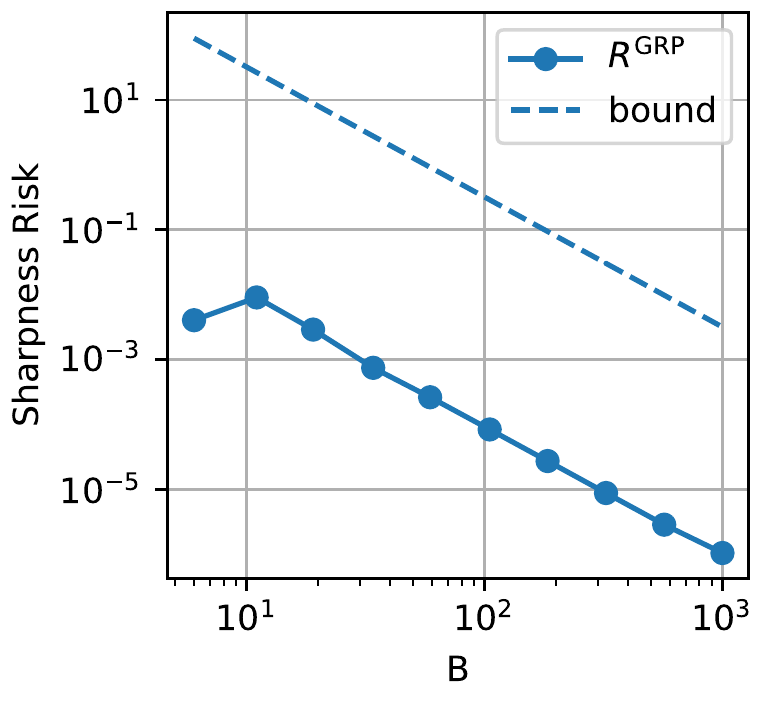}
        \caption{$\GRP(\hat{h})$ vs. $B$}
    \end{subfigure}
    \begin{subfigure}{0.24\textwidth}
        \includegraphics[width=\textwidth]{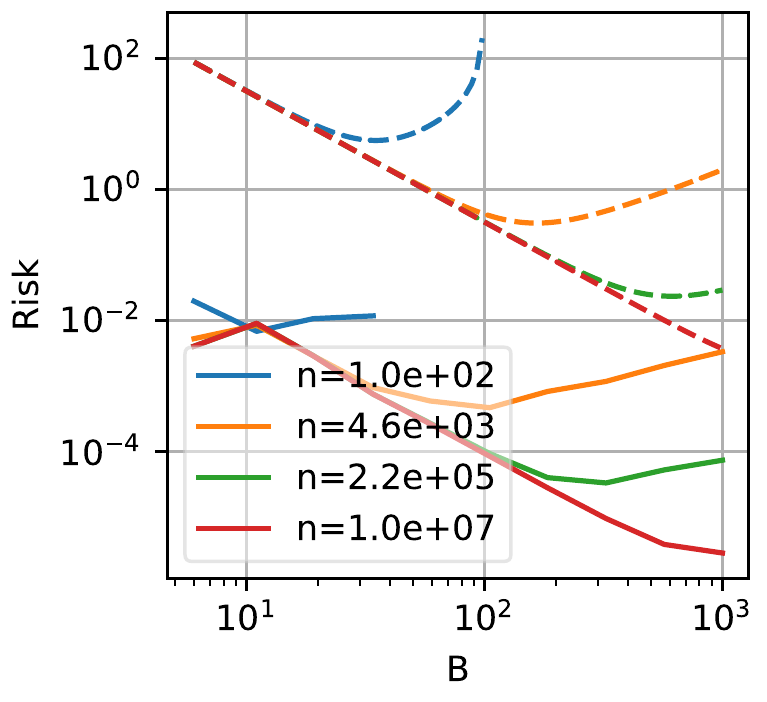}
        \caption{$R(\hat{h})$ vs. $B$}
    \end{subfigure}
    \caption{
    Quadrature estimates of population risks (solid lines) and theoretical upper bounds ($\delta = 0.1$) (dashed lines) for various $n$ and $B$. 
    \textbf{(a)-(c)} The empirical rates, $\REL = O(n^{-0.97}B^{0.93})$ and $\GRP = O(B^{-1.75})$, align with theoretically predicted rates, $\tilde{O}(B/n)$ and $O(B^{-2})$, in Thm.~\ref{Thm:risk_bound}.
    \textbf{(d)} The empirically observed $R$ and our upper bound exhibit similar trends as a function of $B$.
    }
    \label{fig:sim_risk}
\end{figure}

\begin{figure}[h!]
    \centering
    \begin{subfigure}{0.24\textwidth}
        \includegraphics[width=\textwidth]{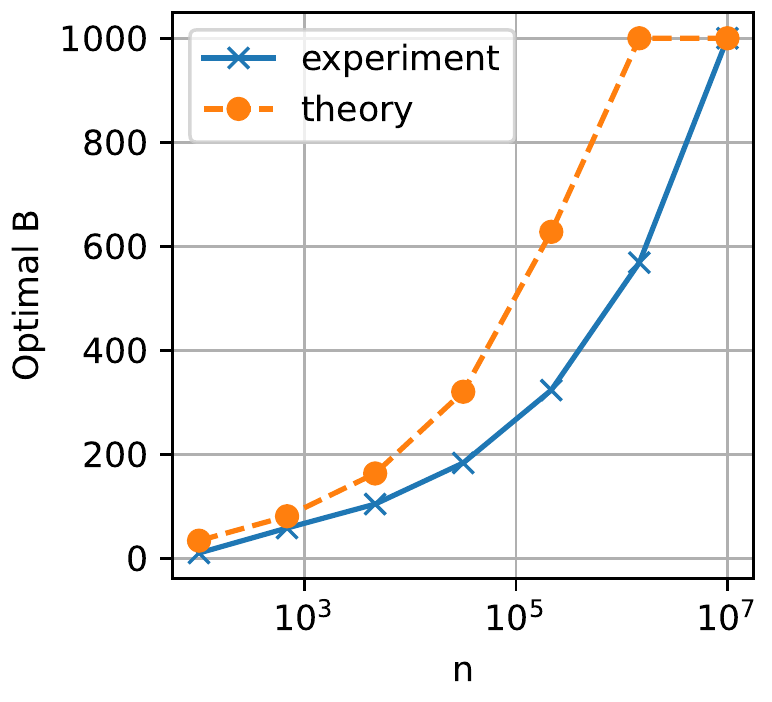}
    \end{subfigure}
    \begin{subfigure}{0.24\textwidth}
        \includegraphics[width=\textwidth]{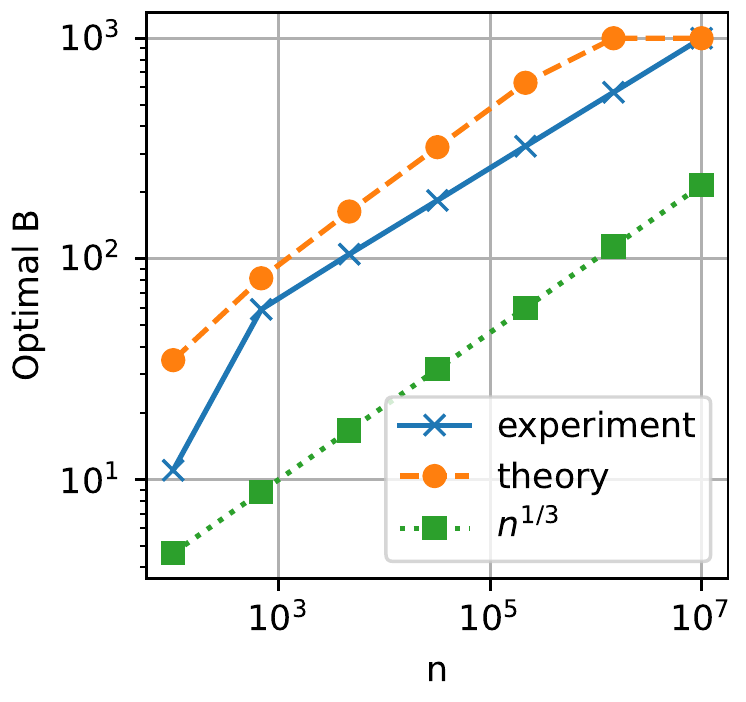}
    \end{subfigure}
    \begin{subfigure}{0.27\textwidth}
        \includegraphics[width=\textwidth]{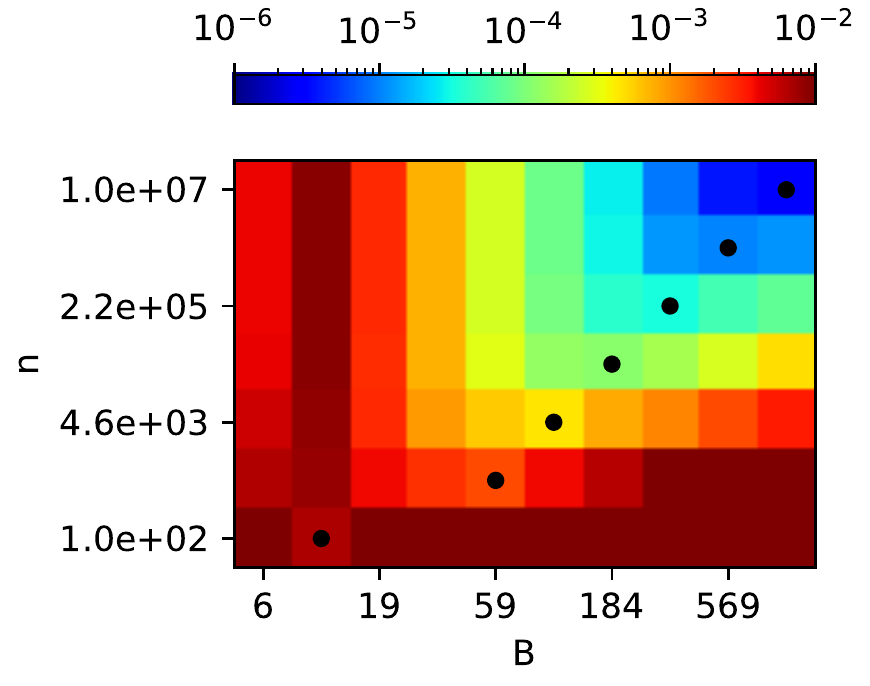}
    \end{subfigure}
    \caption{The optimal number of bins $B$ for different sample sizes $n$ plotted in linear scale (\textit{left}) and log scale (\textit{middle}), and the population risk for various combinations of $n$ and $B$, with the optimal $B$ marked by black dots (\textit{right}). Note that the risk surface is relatively smooth around its minimum, suggesting the robustness of the optimal $B$.
    }
    \label{fig:opt_B}
\end{figure}

\begin{table}[h!]
    \centering
    \caption{Risks under label shift from $\cD(0.5)$ to $\cD(0.1)$, with $n_P = 10^3$ and $n_Q = 10^2$. Standard deviations are computed from 10 random realizations.
    \Labelshift{}, only applying an injective $\hat{g}$, achieves $\GRP=0$ but incurs high $\REL$.
    \Source{}, recalibrated with $B = n_P^{1/3}$ on $\Dtrain$, incurs high $\REL$.
    \Target{}, recalibrated with $B = n_Q^{1/3}$ on $\Dtest$, incurs low $\REL$.
    Our proposed \Composite{} achieves the lowest nonzero $\GRP$ and lowest $\REL$.
    }
    \label{tab:label_shift}
    \vspace{0.5em}




\footnotesize

\begin{tabular}{ccccccc}
\toprule
Method & Estimator & $\REL$ & $\GRP$ & $R$ & $\MSE$ \\
\midrule
\textsc{Source} & 
$\hat{h}_P$ &
0.016$\pm$0.005 & 0.0032$\pm$0.0014 & 0.019$\pm$0.006 & 0.029$\pm$0.006 \\

\textsc{Target} & 
$\hat{h}_Q^{\textrm{target}}$ &
0.0020$\pm$0.0025 & 0.049$\pm$0.006 & 0.051$\pm$0.006 & 0.060$\pm$0.006 \\

\textsc{Label-Shift} & 
$\hat{g}$ &
0.026$\pm$0.006 & \bfseries 0 & 0.026$\pm$0.006 & 0.035$\pm$0.006 \\

\textsc{Composite} & 
$\hat{h}_Q$ &
\bfseries 0.00019$\pm$0.00017 & 0.0032$\pm$0.0014 & \bfseries 0.0034$\pm$0.0013 & \bfseries 0.0127$\pm$0.0013 \\
\bottomrule
\end{tabular}
\end{table}



\section{Discussion}
\label{sec:discussion}

This paper has developed a comprehensive theory for recalibration, considering both calibration and sharpness within the mean-squared-error (MSE) decomposition framework. 
By utilizing this framework, we quantitatively assess the optimal balance between calibration and sharpness, and establish a rigorous upper bound on the finite-sample risk for uniform-mass binning (UMB). 
Additionally, we address the challenge of recalibration in the presence of label shift, where the availability of labeled data from the target distribution is limited. 
Our proposed two-stage approach effectively estimates the recalibration function using ample data from the source domain and adjusts for the label shift using target domain data. 
Importantly, our findings suggest that transferring a calibrated classifier requires a significantly smaller target sample as compared to recalibrating from scratch on the new domain. 
Furthermore, our numerical simulations confirm the tightness of the proposed finite sample bounds, validate the optimal number of bins, and demonstrate the effectiveness of label shift adaptation.

In concluding this paper, we identify several promising directions for future research. 
Firstly, relaxing assumptions in our analysis, such as the monotonicity assumption \ref{assump:monotone} and the availability of labeled target data, would be interesting to explore. 
Secondly, extending our theory of calibration-sharpness balance to recalibration in multiclass classification would be worthwhile. 
Lastly, applying our framework to analyze other recalibration methods beyond UMB, such as isotonic regression \cite{zadrozny2002transforming} and kernel density estimation \cite{zhang2020mix}, would provide valuable insights into their performance and properties.

\bibliographystyle{alpha}
\bibliography{reference}

\newcommand{\etalchar}[1]{$^{#1}$}
\begin{thebibliography}{KPNK{\etalchar{+}}19}

\bibitem[AKS20]{alexandari2020maximum}
Amr Alexandari, Anshul Kundaje, and Avanti Shrikumar.
\newblock Maximum likelihood with bias-corrected calibration is hard-to-beat at
  label shift adaptation.
\newblock In {\em International Conference on Machine Learning}, pages
  222--232. PMLR, 2020.

\bibitem[ALYA18]{azizzadenesheli2018regularized}
Kamyar Azizzadenesheli, Anqi Liu, Fanny Yang, and Animashree Anandkumar.
\newblock Regularized learning for domain adaptation under label shifts.
\newblock In {\em International Conference on Learning Representations}, 2018.

\bibitem[B{\etalchar{+}}50]{brier1950verification}
Glenn~W Brier et~al.
\newblock Verification of forecasts expressed in terms of probability.
\newblock {\em Monthly Weather Review}, 78(1):1--3, 1950.

\bibitem[BGHN22]{blasiok2022unifying}
Jaros{\l}aw B{\l}asiok, Parikshit Gopalan, Lunjia Hu, and Preetum Nakkiran.
\newblock A unifying theory of distance from calibration.
\newblock {\em arXiv preprint arXiv:2211.16886}, 2022.

\bibitem[Cro12]{crown2012validation}
Misty~D Crown.
\newblock Validation of the noaa space weather prediction center's solar flare
  forecasting look-up table and forecaster-issued probabilities.
\newblock {\em Space Weather}, 10(6), 2012.

\bibitem[CS18]{card2018importance}
Dallas Card and Noah~A Smith.
\newblock The importance of calibration for estimating proportions from
  annotations.
\newblock In {\em Proceedings of the 2018 Conference of the North American
  Chapter of the Association for Computational Linguistics: Human Language
  Technologies, Volume 1 (Long Papers)}, pages 1636--1646, 2018.

\bibitem[Daw82]{dawid1982well}
A~Philip Dawid.
\newblock The well-calibrated {B}ayesian.
\newblock {\em Journal of the American Statistical Association},
  77(379):605--610, 1982.

\bibitem[DD20]{desai2020calibration}
Shrey Desai and Greg Durrett.
\newblock Calibration of pre-trained transformers.
\newblock In {\em Proceedings of the 2020 Conference on Empirical Methods in
  Natural Language Processing (EMNLP)}, pages 295--302, 2020.

\bibitem[DF83]{degroot1983comparison}
Morris~H DeGroot and Stephen~E Fienberg.
\newblock The comparison and evaluation of forecasters.
\newblock {\em Journal of the Royal Statistical Society: Series D (The
  Statistician)}, 32(1-2):12--22, 1983.

\bibitem[Elk01]{elkan2001foundations}
Charles Elkan.
\newblock The foundations of cost-sensitive learning.
\newblock In {\em International Joint Conference on Artificial Intelligence},
  volume~17, pages 973--978, 2001.

\bibitem[EMS20]{esuli2020critical}
Andrea Esuli, Alessio Molinari, and Fabrizio Sebastiani.
\newblock A critical reassessment of the {S}aerens-{L}atinne-{D}ecaestecker
  algorithm for posterior probability adjustment.
\newblock {\em ACM Transactions on Information Systems (TOIS)}, 39(2):1--34,
  2020.

\bibitem[GB22]{gruber2022better}
Sebastian Gruber and Florian Buettner.
\newblock Better uncertainty calibration via proper scores for classification
  and beyond.
\newblock {\em Advances in Neural Information Processing Systems},
  35:8618--8632, 2022.

\bibitem[GPR20]{gupta2020predictionset}
Chirag Gupta, Aleksandr Podkopaev, and Aaditya Ramdas.
\newblock Distribution-free binary classification: Prediction sets, confidence
  intervals and calibration.
\newblock {\em Advances in Neural Information Processing Systems},
  33:3711--3723, 2020.

\bibitem[GPSW17]{guo2017calibration}
Chuan Guo, Geoff Pleiss, Yu~Sun, and Kilian~Q Weinberger.
\newblock On calibration of modern neural networks.
\newblock In {\em International Conference on Machine Learning}, pages
  1321--1330. PMLR, 2017.

\bibitem[GR05]{gneiting2005weather}
Tilmann Gneiting and Adrian~E Raftery.
\newblock Weather forecasting with ensemble methods.
\newblock {\em Science}, 310(5746):248--249, 2005.

\bibitem[GR07]{gneiting2007strictly}
Tilmann Gneiting and Adrian~E Raftery.
\newblock Strictly proper scoring rules, prediction, and estimation.
\newblock {\em Journal of the American Statistical Association},
  102(477):359--378, 2007.

\bibitem[GR21]{gupta2021withoutsplitting}
Chirag Gupta and Aaditya Ramdas.
\newblock Distribution-free calibration guarantees for histogram binning
  without sample splitting.
\newblock In {\em International Conference on Machine Learning}, pages
  3942--3952. PMLR, 2021.

\bibitem[GR22]{gupta2021top}
Chirag Gupta and Aaditya Ramdas.
\newblock Top-label calibration and multiclass-to-binary reductions.
\newblock In {\em International Conference on Learning Representations}, 2022.

\bibitem[GRA{\etalchar{+}}20]{gupta2020splines}
Kartik Gupta, Amir Rahimi, Thalaiyasingam Ajanthan, Thomas Mensink, Cristian
  Sminchisescu, and Richard Hartley.
\newblock Calibration of neural networks using splines.
\newblock In {\em International Conference on Learning Representations}, 2020.

\bibitem[GWBL20]{garg2020unified}
Saurabh Garg, Yifan Wu, Sivaraman Balakrishnan, and Zachary Lipton.
\newblock A unified view of label shift estimation.
\newblock {\em Advances in Neural Information Processing Systems},
  33:3290--3300, 2020.

\bibitem[HLM{\etalchar{+}}20]{huang2020tutorial}
Yingxiang Huang, Wentao Li, Fima Macheret, Rodney~A Gabriel, and Lucila
  Ohno-Machado.
\newblock A tutorial on calibration measurements and calibration models for
  clinical prediction models.
\newblock {\em Journal of the American Medical Informatics Association},
  27(4):621--633, 2020.

\bibitem[JOKOM12]{jiang2012calibrating}
Xiaoqian Jiang, Melanie Osl, Jihoon Kim, and Lucila Ohno-Machado.
\newblock Calibrating predictive model estimates to support personalized
  medicine.
\newblock {\em Journal of the American Medical Informatics Association},
  19(2):263--274, 2012.

\bibitem[JS12]{jolliffe2012forecast}
Ian~T Jolliffe and David~B Stephenson.
\newblock {\em Forecast verification: a practitioner's guide in atmospheric
  science}.
\newblock John Wiley \& Sons, 2012.

\bibitem[KLM19]{kumar2019verified}
Ananya Kumar, Percy~S Liang, and Tengyu Ma.
\newblock Verified uncertainty calibration.
\newblock {\em Advances in Neural Information Processing Systems}, 32, 2019.

\bibitem[KPNK{\etalchar{+}}19]{kull2019beyond}
Meelis Kull, Miquel Perello~Nieto, Markus K{\"a}ngsepp, Telmo Silva~Filho, Hao
  Song, and Peter Flach.
\newblock Beyond temperature scaling: Obtaining well-calibrated multi-class
  probabilities with {D}irichlet calibration.
\newblock {\em Advances in Neural Information Processing Systems}, 32, 2019.

\bibitem[KSFF17]{kull2017beyond}
Meelis Kull, Telmo~M Silva~Filho, and Peter Flach.
\newblock Beyond sigmoids: How to obtain well-calibrated probabilities from
  binary classifiers with beta calibration.
\newblock {\em Electronic Journal of Statistics}, 11:5052--5080, 2017.

\bibitem[LHHD22]{lee2022t}
Donghwan Lee, Xinmeng Huang, Hamed Hassani, and Edgar Dobriban.
\newblock T-cal: An optimal test for the calibration of predictive models.
\newblock {\em arXiv preprint arXiv:2203.01850}, 2022.

\bibitem[LWS18]{lipton2018detecting}
Zachary Lipton, Yu-Xiang Wang, and Alexander Smola.
\newblock Detecting and correcting for label shift with black box predictors.
\newblock In {\em International Conference on Machine Learning}, pages
  3122--3130. PMLR, 2018.

\bibitem[ME67]{murphy1967verification}
Allan~H Murphy and Edward~S Epstein.
\newblock Verification of probabilistic predictions: A brief review.
\newblock {\em Journal of Applied Meteorology and Climatology}, 6(5):748--755,
  1967.

\bibitem[Mur73]{murphy1973new}
Allan~H Murphy.
\newblock A new vector partition of the probability score.
\newblock {\em Journal of Applied Meteorology and Climatology}, 12(4):595--600,
  1973.

\bibitem[MW77]{murphy1977reliability}
Allan~H Murphy and Robert~L Winkler.
\newblock Reliability of subjective probability forecasts of precipitation and
  temperature.
\newblock {\em Journal of the Royal Statistical Society Series C: Applied
  Statistics}, 26(1):41--47, 1977.

\bibitem[NCH15]{naeini2015obtaining}
Mahdi~Pakdaman Naeini, Gregory Cooper, and Milos Hauskrecht.
\newblock Obtaining well calibrated probabilities using {B}ayesian binning.
\newblock In {\em Proceedings of the AAAI Conference on Artificial
  Intelligence}, volume~29, 2015.

\bibitem[NO15]{nguyen2015posterior}
Khanh Nguyen and Brendan O’Connor.
\newblock Posterior calibration and exploratory analysis for natural language
  processing models.
\newblock In {\em Proceedings of the 2015 Conference on Empirical Methods in
  Natural Language Processing}, pages 1587--1598, 2015.

\bibitem[P{\etalchar{+}}99]{platt1999probabilistic}
John Platt et~al.
\newblock Probabilistic outputs for support vector machines and comparisons to
  regularized likelihood methods.
\newblock {\em Advances in Large Margin Classifiers}, 10(3):61--74, 1999.

\bibitem[PR21]{podkopaev2021distribution}
Aleksandr Podkopaev and Aaditya Ramdas.
\newblock Distribution-free uncertainty quantification for classification under
  label shift.
\newblock In {\em Uncertainty in Artificial Intelligence}, pages 844--853.
  PMLR, 2021.

\bibitem[PSB22]{popordanoska2022consistent}
Teodora Popordanoska, Raphael Sayer, and Matthew Blaschko.
\newblock A consistent and differentiable $l_p$ canonical calibration error
  estimator.
\newblock {\em Advances in Neural Information Processing Systems},
  35:7933--7946, 2022.

\bibitem[RCSM22]{roelofs2022mitigating}
Rebecca Roelofs, Nicholas Cain, Jonathon Shlens, and Michael~C Mozer.
\newblock Mitigating bias in calibration error estimation.
\newblock In {\em International Conference on Artificial Intelligence and
  Statistics}, pages 4036--4054. PMLR, 2022.

\bibitem[SBW{\etalchar{+}}22]{sun2022predicting}
Zeyu Sun, Monica~G Bobra, Xiantong Wang, Yu~Wang, Hu~Sun, Tamas Gombosi, Yang
  Chen, and Alfred Hero.
\newblock Predicting solar flares using cnn and lstm on two solar cycles of
  active region data.
\newblock {\em arXiv preprint arXiv:2204.03710}, 2022.

\bibitem[SLD02]{saerens2002adjusting}
Marco Saerens, Patrice Latinne, and Christine Decaestecker.
\newblock Adjusting the outputs of a classifier to new a priori probabilities:
  A simple procedure.
\newblock {\em Neural Computation}, 14(1):21--41, 2002.

\bibitem[TLG{\etalchar{+}}20]{tian2020posterior}
Junjiao Tian, Yen-Cheng Liu, Nathaniel Glaser, Yen-Chang Hsu, and Zsolt Kira.
\newblock Posterior re-calibration for imbalanced datasets.
\newblock {\em Advances in Neural Information Processing Systems},
  33:8101--8113, 2020.

\bibitem[VWA{\etalchar{+}}19]{vaicenavicius2019evaluating}
Juozas Vaicenavicius, David Widmann, Carl Andersson, Fredrik Lindsten, Jacob
  Roll, and Thomas Sch{\"o}n.
\newblock Evaluating model calibration in classification.
\newblock In {\em The 22nd International Conference on Artificial Intelligence
  and Statistics}, pages 3459--3467. PMLR, 2019.

\bibitem[WKT20]{wenger2020non}
Jonathan Wenger, Hedvig Kjellstr{\"o}m, and Rudolph Triebel.
\newblock Non-parametric calibration for classification.
\newblock In {\em International Conference on Artificial Intelligence and
  Statistics}, pages 178--190. PMLR, 2020.

\bibitem[ZE01]{zadrozny2001obtaining}
Bianca Zadrozny and Charles Elkan.
\newblock Obtaining calibrated probability estimates from decision trees and
  naive {B}ayesian classifiers.
\newblock In {\em Proceedings of the Eighteenth International Conference on
  Machine Learning}, pages 609--616, 2001.

\bibitem[ZE02]{zadrozny2002transforming}
Bianca Zadrozny and Charles Elkan.
\newblock Transforming classifier scores into accurate multiclass probability
  estimates.
\newblock In {\em Proceedings of the eighth ACM SIGKDD International Conference
  on Knowledge Discovery and Data Mining}, pages 694--699, 2002.

\bibitem[ZKH20]{zhang2020mix}
Jize Zhang, Bhavya Kailkhura, and T~Yong-Jin Han.
\newblock Mix-n-match: Ensemble and compositional methods for uncertainty
  calibration in deep learning.
\newblock In {\em International Conference on Machine Learning}, pages
  11117--11128. PMLR, 2020.

\bibitem[ZWF{\etalchar{+}}21]{zhao2021calibrate}
Zihao Zhao, Eric Wallace, Shi Feng, Dan Klein, and Sameer Singh.
\newblock Calibrate before use: Improving few-shot performance of language
  models.
\newblock In {\em International Conference on Machine Learning}, pages
  12697--12706. PMLR, 2021.

\end{thebibliography}

\clearpage
\appendix
\section{Proof of Proposition~\ref{Prop:risk_decomposition}}\label{sec:risk_decomp_additional}
\begin{proof}[Proof of Proposition~\ref{Prop:risk_decomposition}]

\def \PG{Y_{h(Z)}}
\def \PH{Y_Z}

First of all, we recall the definition of the two risks from \eqref{eqn:loss_rel} and \eqref{eqn:loss_grp}:
\begin{align*}
    \REL(h; f) &= \bbE \left[ \big( f(X) - \bbE\left[ Y \mid f(X) \right] \big)^2 \right]\\
    \GRP(h; f) &= \bbE \left[ \big( \bbE\left[ Y \mid h \circ f(X) \right] - \bbE\left[ Y \mid f(X) \right] \big)^2 \right].
\end{align*}
To keep our notation concise, we use $Z=f(X)$ as a shorthand notation, and also let $Y_Z := \bbE[ Y \mid Z ]$ and $Y_{h(Z)} = \bbE[ Y \mid h(Z) ]$ throughout this proof. 
We can decompose the recalibration risk from Definition~\ref{defn:recalibration_risk}:
\begin{align*}
    R(h)
    &= \bbE[(h(Z) - \PH)^2] \\
    &= \bbE[(h(Z) - \PG + \PG - \PH)^2] \\
    &= \bbE[(h(Z) - \PG)^2] + \bbE[(\PG - \PH)^2] + 2 \bbE[(h(Z) - \PG)(\PG - \PH)] \\
    &= \bbE[(h(Z) - \PG)^2] + \bbE[(\PG - \PH)^2] + 2 \bbE[\bbE[(h(Z) - \PG)(\PG - \PH) \mid h(Z)]] \\
    &= \bbE[(h(Z) - \PG)^2] + \bbE[(\PG - \PH)^2] + 2 \bbE[(h(Z) - \PG)(\PG - \bbE[\PH \mid h(Z)])] \\
    &= \bbE[(h(Z) - \PG)^2] + \bbE[(\PG - \PH)^2] + 2 \bbE[(h(Z) - \PG)(\PG - \PG)] \\
    &= \underbrace{\bbE[(h(Z) - \PG)^2]}_{\REL(h)} + \underbrace{\bbE[(\PG - \PH)^2]}_{\GRP(h)}.
\end{align*}    
\end{proof}

\section{Proof of Theorem \ref{Thm:risk_bound}}\label{sec:proof_risk_recalib}

In this section, we present a proof of Theorem \ref{Thm:risk_bound}. 
Let $(Y, Z) \in \cY \times \cZ$ be random variables that admits a joint distribution $P_{Y,Z}$, which we assume to be fixed throughout this section. 
Let $S = \{(y_i, z_i) \in \cY \times \cZ: i \in [n]\}$ and let $\binsch = \{I_1, I_2, \dots, I_B\}$ be the uniform-mass binning scheme (cf. Definition \ref{def:unif_mass_binning}) of size $B$ induced by ($z_i$'s in) $S$. 
Note that if $S$ is a random sample from $P_{Y,Z}$, then the binning scheme $\binsch$ induced by $S$ is also a random variable following a derived distribution.
To facilitate our analysis, we introduce the notion of well-balanced binning.

\begin{definition}[Well-balanced binning; \cite{kumar2019verified}]\label{defn:binning_balanced}
    Let $B \in \NN$, let $Z$ be a random variable that takes value in $[0,1]$, and let $\alpha \in \RR$ such that $\alpha \geq 1$. 
    A binning scheme $\cB$ of size $B$ is \emph{$\alpha$-well-balanced with respect to $Z$} if 
    \[
        \frac{1}{\alpha B} \leq \bbP[Z\in I_b] \leq \frac{\alpha}{B},   \qquad \forall b \in [B].
    \]
\end{definition}

In addition, we define two (parameterized families of) Boolean-valued functions $\Pbalance$ and $\Papprox$ as follows: for any binning scheme $\binsch$,
\begin{align}
    \forall \alpha \in \RR,\quad
    \Pbalance(\binsch;\alpha)  &:= \indic \left\{ \frac{1}{\alpha |\binsch|} \leq \bbP\left[ Z \in I\right] \leq \frac{\alpha}{|\binsch|}, ~~ \forall I \in \binsch \right\},
        \label{eqn:Phi_balance}\\
    \forall \varepsilon \in \RR,\quad
    \Papprox(\binsch; \varepsilon)   &:= \indic \left\{ \max_{I \in \binsch} \left| \hat{\mu}_{I} - \mu_{I} \right| \leq \varepsilon \right\},
        \label{eqn:Phi_approx}
\end{align}
where $\indic(A) = 1$ if and only if the predicate $A$ is true, and for each interval $I \in \binsch$,
\begin{equation}\label{eqn:bin_means}
    \hat{\mu}_I = \frac{\sum_{i=1}^n y_i \cdot \indic_{I}(z_i)}{\sum_{i=1}^n \cdot \indic_{I}(z_i)}
    \qquad\text{and}\qquad
    \mu_I = \bbE_{(Y,Z) \sim P_{Y,Z}} \left[ Y \cdot \indic_{I}(Z) \right].
\end{equation}
Note that if $\Pbalance(\binsch;\alpha) = 1$ for $\alpha \geq 1$, then $\binsch$ is $\alpha$-well-balanced with respect to $Z$ (cf. Definition \ref{defn:binning_balanced}). 
Also, if $\Papprox(\binsch;\varepsilon) = 1$ for $\varepsilon \geq 0$, then the conditional empirical mean of $Y$ in each bin $I \in \binsch$ approximates the conditional expectation with error at most $\varepsilon$, uniformly for all bins.

The rest of this section is organized as follows. 
In Section \ref{sec:certification_premises}, we ensure that for an appropriate choice of $\alpha, \varepsilon \in \RR$, it holds with high probability (with respect to the randomness in $\binsch$) that $\Pbalance(\binsch;\alpha) = \Papprox(\binsch; \varepsilon) = 1$. 
In Section \ref{sec:conditional_bounds}, we establish upper bounds on the reliability risk $\REL$ and the sharpness risk $\GRP$ under the premise that $\Pbalance(\binsch;\alpha) = \Papprox(\binsch; \varepsilon) = 1$. 
Finally, in Section \ref{sec:proof_theorem1}, we conclude the proof of Theorem \ref{Thm:risk_bound} by combining these results together.

\subsection{High-probability certification of the conditions}\label{sec:certification_premises}
\paragraph{Well-balanced binning scheme.} 
First of all, we observe that the uniform-bass binning scheme $\binsch$ induced by an IID random sample from $P_{Y,Z}$ is 2-well-balanced with high probability, if the sample size is sufficiently large. 
Here we paraphrase a result from \cite{kumar2019verified} in our language.

\begin{lemma}[{\cite[Lemma 4.3]{kumar2019verified}}]
\label{LEMMA:WELL}
Let $S = \{Z_i: i \in [n]\}$ be an IID sample drawn from $P_Z$ and let $\binsch$ be the uniform-mass binning scheme of size $B$ induced by $S$. 
There exists a universal constant $c' > 0$ such that for any $\delta \in (0,1)$, if $n \geq c' \cdot B \log (B/\delta)$, then $\Pbalance(\binsch, 2) = 1$ with probability at least $1 - \delta$.
\end{lemma}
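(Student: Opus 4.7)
The plan is to reduce the problem to the uniform distribution case via the probability integral transform, then exploit the exact Beta distribution of differences of uniform order statistics, and finally apply a multiplicative Chernoff bound together with a union bound over the $B$ bins.

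First, I would set $V_i := F_Z(Z_i)$. Assuming $F_Z$ is continuous (inherited from \ref{assump:pos}), the $V_i$ are IID $\mathrm{Uniform}[0,1]$, and because $F_Z$ is monotone, the UMB scheme induced by $\{Z_i\}$ corresponds bijectively to the UMB scheme induced by $\{V_i\}$ with the same index structure. Writing $k_b := \lfloor nb/B \rfloor$ for $b \in [B-1]$ with boundary conventions $k_0 := 0$, $k_B := n$, and $V_{(0)} := 0$, $V_{(n)} := 1$, one obtains $\bbP[Z \in I_b] = F_Z(u_b) - F_Z(u_{b-1}) = V_{(k_b)} - V_{(k_{b-1})}$. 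Thus it suffices to show that each spacing $V_{(k_b)} - V_{(k_{b-1})}$ lies in $[1/(2B), 2/B]$ uniformly in $b$.

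Second, I would invoke the classical identity that a sum of $m = k_b - k_{b-1}$ consecutive spacings of $n$ IID uniforms on $[0,1]$ is $\mathrm{Beta}(m, n+1-m)$-distributed. Since $k_b = \lfloor nb/B \rfloor$, the bin width $m_b$ lies in $\{\lfloor n/B \rfloor, \lceil n/B \rceil\}$, so $m_b = \Theta(n/B)$ and $\mathbb{E}[\bbP[Z \in I_b]] = m_b/(n+1) \asymp 1/B$. Third, I would translate tail bounds on $\mathrm{Beta}(m, n+1-m)$ into tail bounds on a $\mathrm{Binomial}(n, p)$ via the duality $\bbP[\mathrm{Beta}(m, n+1-m) > t] = \bbP[\mathrm{Binomial}(n, t) < m]$. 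Taking $t = 2/B$ gives $\bbP[\mathrm{Binomial}(n, 2/B) < m_b]$, where the binomial mean is $2n/B$ and the threshold $m_b \approx n/B$ is roughly half the mean; a multiplicative Chernoff bound yields a failure probability of at most $\exp(-c n/B)$ for an absolute constant $c$. The symmetric event $\bbP[Z \in I_b] < 1/(2B)$ is handled analogously via a right-tail Chernoff bound on $\mathrm{Binomial}(n, 1/(2B))$. A union bound over the $B$ bins then yields $\bbP[\binsch \text{ not } 2\text{-well-balanced}] \leq 2B \exp(-c n/B)$, which is at most $\delta$ once $n \geq c' B \log(B/\delta)$ for a suitable universal constant $c' > 0$.

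The main obstacle I anticipate is choosing the right concentration inequality to obtain the advertised $n = \Omega(B \log(B/\delta))$ scaling. A direct application of the Dvoretzky--Kiefer--Wolfowitz inequality to the empirical CDF gives only $n = \Omega(B^2 \log(1/\delta))$, because its additive error $\sqrt{\log(1/\delta)/n}$ must be driven below $1/(2B)$. The Beta--Binomial duality localizes the analysis to a single bin with mean $\Theta(1/B)$, so that multiplicative Chernoff controls the \emph{relative} deviation and avoids the extra factor of $B$. A secondary but routine issue is bookkeeping around the floor functions in $k_b = \lfloor nb/B \rfloor$, which is benign provided $n/B$ exceeds a small absolute constant, a condition that is in any case enforced by the sample-size requirement.
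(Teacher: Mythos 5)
Your proposal is correct, but it proves the lemma by a genuinely different route than the one the paper relies on: the paper does not prove this statement at all---it imports it as Lemma 4.3 of Kumar et al.\ and only sketches their argument, which discretizes $[0,1]$ into a fine cover of $10B$ intervals of equal population mass $\tfrac{1}{10B}$, shows empirical-mass concentration on that fixed cover, and then sandwiches each data-dependent bin $I_b$ by a union of cover cells (this is precisely how they beat the na\"ive Chernoff/VC route that gives $O(B^2\log(B/\delta))$). You instead work with the exact law of the bin masses: after the probability integral transform, $\bbP[Z\in I_b]$ is a sum of $m_b$ consecutive uniform spacings, hence $\mathrm{Beta}(m_b, n+1-m_b)$, and the Beta--Binomial duality converts each tail event into a binomial tail with mean $\Theta(n/B)$, where multiplicative Chernoff plus a union bound over the $B$ bins gives exactly the $n=\Omega(B\log(B/\delta))$ scaling; your own diagnosis of why DKW-type additive bounds lose a factor of $B$ is on point. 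The two approaches buy different things: the covering argument generalizes more readily to settings where one only has empirical-process-type control (and is the form in which the constant $c'\approx 2420$ quoted in the paper arises), whereas your per-bin exact-distribution argument is self-contained, avoids any empirical-process machinery, and yields small explicit constants (e.g., failure probability about $2B e^{-cn/B}$). Two benign bookkeeping points to tidy up if you write it out: the convention $V_{(n)}:=1$ should really be stated as $F_Z(u_B)=F_Z(1)=1$, since the last bin's mass $1-V_{(k_{B-1})}$ is a sum of $n+1-k_{B-1}$ spacings (one more than $n-k_{B-1}$), and the continuity of $F_Z$ (Assumption \ref{assump:pos}) is indeed needed both for the uniform transform and to ignore the closed/open endpoint distinctions; neither affects the rate.
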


Lemma \ref{LEMMA:WELL} states that
\[
    n \geq c' \cdot B \log \left(\frac{B}{\delta}\right)
    \qquad\implies\qquad
    \bbP\left[ \binsch \text{ is 2-well-balanced with respect to }P_Z \right] \geq 1 - \delta.
\]
While the value of the universal constant $c$ was not specified in the original reference \cite{kumar2019verified}, we remark that one may set, for example, $c' = 2420$, which can be verified by following their proof with $c'$ kept explicit.

The proof of Lemma \ref{LEMMA:WELL} in \cite{kumar2019verified} relies on a discretization argument that considers a fine-grained cover of $\cZ = [0,1]$ consisting of disjoint intervals---namely, $\left\{ I'_j: j \in [10B] \right\}$ such that $\bbP[ Z \in I'_j] = \frac{1}{10B}$ for all $j \in [10B]$---and then approximates each $I_b$ by a subset of the cover.
As the authors of \cite{kumar2019verified} remarked, this argument provides a tighter sample complexity upper bound than na\"ively applying Chernoff bounds or a standard VC dimension argument, which would yield an upper bound of order $O\left(B^2 \log\left( \frac{B}{\delta}\right) \right)$. 
We omit the proof of Lemma \ref{LEMMA:WELL} and refer interested readers to the referenced paper \cite{kumar2019verified} for more details.

\paragraph{Uniform concentration of bin-wise means.}
Next, we argue that for the uniform-mass binning scheme $\binsch$ induced by an IID sample, the conditional empirical means of each bin concentrates to the population conditional expectation, uniformly for all bins in $\binsch$. 
Here we restate a result from \cite{gupta2021withoutsplitting}.

\begin{lemma}[{\cite[Corollary 1]{gupta2021withoutsplitting}}]
\label{LEMMA:FREE}
    Let $P_Z$ be an absolutely continuous probability measure on $\cZ = [0,1]$, and $S = \{Z_i: i \in [n]\}$ be an IID sample drawn from $P_Z$. 
    Let $B \in \NN$ such that $B \leq \frac{n}{2}$ and $\binsch$ be the uniform-mass binning scheme of size $B$ induced by $S$. 
    Then for any $\delta \in (0,1)$, 
    \begin{equation}
        \bbP \left[ \Papprox(\binsch; \varepsilon_{\delta}) = 1 \right] \geq 1 - \delta
        \qquad
        \text{where}
        \qquad
        \varepsilon_{\delta} = \sqrt{\frac{1}{2( \lfloor n/B \rfloor - 1)} \log \left(\frac{2B}{\delta}\right) } + \frac{1}{\lfloor n/B \rfloor}.
    \end{equation}
\end{lemma}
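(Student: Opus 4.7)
The plan is to exploit a classical property of the order statistics of an absolutely continuous distribution: conditional on two fixed order statistics, the intervening samples are conditionally IID from the truncated law. This decouples the randomness of the UMB bin partition from the subsequent concentration step, reducing the analysis to a plain Hoeffding bound inside each bin. Write $m := \lfloor n/B \rfloor$, and for simplicity assume $n = mB$; the general case differs only by $\pm 1$ in bin sizes. Let $u_b := Z_{(bm)}$ for $b \in [B-1]$ be the UMB bin boundaries, together with $u_0 = 0$, $u_B = 1$. Then bin $I_b$ contains exactly the samples of ranks $(b-1)m+1,\dots,bm$; among these, $Z_{(bm)} = u_b$ is the ``right boundary'' and the other $m-1$ are ``interior''.

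Conditional on $(u_1, \dots, u_{B-1})$, a standard property of order statistics from an absolutely continuous distribution gives that the interior $Z$-values of bin $b$ are IID from $P_Z$ restricted to $(u_{b-1}, u_b)$. Since $(Y_i, Z_i)$ are jointly IID and we are conditioning on a function of $Z$'s only, the corresponding interior $Y$-values are conditionally IID with mean $\bbE[Y \mid Z \in I_b] = \mu_{I_b}$ and support in $[0,1]$. I would then apply Hoeffding's inequality bin-by-bin, with per-bin failure probability $\delta/B$, and take a union bound over the $B$ bins to obtain that, conditional on the boundaries and with probability at least $1-\delta$,
\[
    \bigl| \bar Y_b^{\mathrm{int}} - \mu_{I_b} \bigr|
    \;\leq\; \sqrt{\frac{\log(2B/\delta)}{2(m-1)}},
    \qquad \forall b \in [B],
\]
where $\bar Y_b^{\mathrm{int}}$ is the average of the $m-1$ interior $Y$-values in bin $I_b$. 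Because this bound holds with probability at least $1-\delta$ conditional on every realization of the boundaries, it also holds unconditionally with probability at least $1-\delta$.

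It remains to pass from $\bar Y_b^{\mathrm{int}}$ to $\hat\mu_{I_b}$, which averages all $m$ samples in the bin (including the boundary sample). A direct computation gives
\[
    \bigl|\hat\mu_{I_b} - \bar Y_b^{\mathrm{int}}\bigr|
    = \frac{1}{m}\, \bigl| Y_{\mathrm{boundary}} - \bar Y_b^{\mathrm{int}} \bigr|
    \;\leq\; \frac{1}{m},
\]
since $Y_i \in [0,1]$. Combining with the previous display via the triangle inequality yields $\max_{b\in[B]} |\hat\mu_{I_b} - \mu_{I_b}| \leq \varepsilon_\delta$, which is precisely the statement $\Papprox(\binsch; \varepsilon_\delta) = 1$.

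I expect no serious obstacle. The only conceptual subtlety is invoking the order-statistics decomposition carefully enough to decouple the random partition from the concentration argument, and correctly accounting for the boundary sample, which produces the additive $1/m$ term in $\varepsilon_\delta$. A fully rigorous write-up should additionally track the $\pm 1$ non-uniformity of bin sizes when $B \nmid n$, but this affects only constants and leaves the bound as stated.
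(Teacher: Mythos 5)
Your proposal is correct and follows essentially the same route as the source of this result: the paper itself imports Lemma~\ref{LEMMA:FREE} from \cite{gupta2021withoutsplitting} without proof, and the argument there is precisely your combination of conditioning on the boundary order statistics (so the interior points in each bin are conditionally IID from the truncated law), a per-bin Hoeffding bound on the $\lfloor n/B\rfloor-1$ interior $Y$-values with a union bound over the $B$ bins, and the crude $1/\lfloor n/B\rfloor$ correction for the boundary sample. The only reading note is that $\mu_I$ in \eqref{eqn:bin_means} is to be understood as the conditional mean $\bbE[Y\mid Z\in I]$ (as you implicitly and correctly assume), and your closing remark about the case $B\nmid n$ is in fact exact rather than ``up to constants,'' since every bin then contains at least $\lfloor n/B\rfloor$ points.
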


Lemma \ref{LEMMA:FREE} states that under the mild regularity condition of $P_Z$ being absolutely continuous, the uniform-mass binning accurately approximates all bin-wise conditional means as long as there are at least two samples per bin in the sense that
\[
    n \geq 2B
    \qquad\implies\qquad
    \bbP\left[  \sup_{b \in [B]} |\hat{\mu}_b - \mu_b| \leq \sqrt{\frac{1}{2(\lfloor n/B \rfloor - 1)} \log \left(\frac{2B}{\delta}\right) } + \frac{1}{\lfloor n/B \rfloor} \right] \geq 1-\delta.
\]

\subsection{Conditional upper bounds on reliability risk and sharpness risk}\label{sec:conditional_bounds}
In this section, we establish upper bounds on the reliability risk $\REL$ and the sharpness risk $\GRP$ for $\hat{h}$ under the premise that $\Pbalance(\binsch;\alpha) = 1$ and $ \Papprox(\binsch; \varepsilon) = 1$ for appropriate parameters $\alpha, \varepsilon \in \RR$.

\paragraph{Preparation.}
To avoid clutter in the lemma statements to follow, here we recall our problem setting and set several notation that will be used throughout this section. 
Recall that $P = P_{X,Y}$ is a joint distribution on $\cX \times \cY$ and let $f: \cX \to \cZ$ is a measurable function. 
In addition, we let $\tilde{S} = \left\{ (x_i, y_i) \in \cX \times \cY: i \in [n] \right\}$ be an IID sample drawn from $P$, and let $S = \left\{ (z, y) \in \cZ \times \cY: (x,y) \in \tilde{S} \text{ and } z = f(x) \right\}$. 
Let $\binsch$ be the uniform-mass binning scheme induced by ($z$'s in) $S$, and let $\hat{h} = \hat{\binsch}: \cZ \to \cZ$ be the recalibration function derived from $\binsch$ as we described in Section \ref{sec:est_hp}; see \eqref{eqn:recalib_ftn}. 
The dependence among $P, f, \tilde{S}, S, \binsch$, and $\hat{h}$ are summarized by a diagram in Figure \ref{fig:diagram}.
\begin{figure}[h]
    \centering
    \begin{tikzpicture}
        \node at (0,1)      (distribution)      {$P$};
        \node at (2,1)      (dataset)           {$\tilde{S}$};
        \node at (2,-1)     (classifier)        {$f$};
        \node at (4,0)      (calibrationset)    {$S$};
        \node at (6,0)      (binning)           {$\binsch$};
        \node at (8,0)      (recalibration)     {$\hat{h}$};

        \draw[->]   (distribution) -- (dataset);
        \draw[->]   (classifier) -- (calibrationset);
        \draw[->]   (dataset) -- (calibrationset);
        \draw[->]   (calibrationset) -- (binning);
        \draw[->]   (binning) -- (recalibration);
    \end{tikzpicture}
    \caption{Stochastic dependence among $P, f, \tilde{S}, S, \binsch$, and $\hat{h}$.}
    \label{fig:diagram}
\end{figure}

Furthermore, we define the index function for a binning scheme to facilitate our analysis.
\begin{definition}\label{def:index}
    Let $\binsch$ be a binning scheme. The \emph{index function} for $\binsch$ is the function $\beta: \cZ \to [|\binsch|]$ such that
    \begin{equation}
        \beta(z) = \sum_{I \in \binsch} \indic_{ (0, \sup I] } (z).
    \end{equation}
    
\end{definition}

\begin{remark}\label{rem:index}
    Note that $\beta$ is a measurable function and defines an index function that identifies which bin of $\binsch$ the argument $z \in [0,1]$ belongs to. 
    Specifically, suppose that $\binsch = \{ I_1, \dots, I_B \}$ for some $B \in \NN$ and there exists $u_0, u_1, \dots, u_B \in [0,1]$ such that (i) $0 = u_0 < u_1 < \dots < u_B = 1$ and (ii) $I_b = (u_{b-1}, u_b]$ for all $b\in [B] \setminus \{1\}$ and $I_1 = [u_0, u_1]$. 
    Then $\beta(z) = b$ if and only if $z \in I_b$.  
\end{remark}

\subsubsection{Calibration risk upper bound} 
We observe that if a binning scheme $\binsch$ produces empirical means $\hat{\mu}_I$ that approximate the true means $\mu_I$ with error at most $\varepsilon$, then the calibration risk is upper bounded by $\varepsilon^2$.

\begin{lemma}[Calibration risk bound]
\label{Lemma:REL}
    For any $\varepsilon \geq 0$, if $\Papprox(\binsch; \varepsilon) = 1$, then
    \[
        \REL(\hat{h}; f, P) \leq \varepsilon^2.
    \]
\end{lemma}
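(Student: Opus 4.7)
\textbf{Proof plan for Lemma~\ref{Lemma:REL}.}
The recalibration function $\hat h$ is piecewise constant: $\hat h\circ f(X)$ takes the value $\hat\mu_I$ whenever $f(X)\in I$, for each $I\in\binsch$. So the random variable $\hat h\circ f(X)$ has at most $|\binsch|$ distinct values. The plan is to compute $\bbE[Y\mid \hat h\circ f(X)]$ explicitly on each level set and then compare it, bin by bin, to $\hat h\circ f(X)$ itself.

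First I would group bins sharing the same empirical mean. For each value $v$ in the range of $\hat h$, let $\cJ_v := \{ I \in \binsch : \hat\mu_I = v\}$ and $J_v := \bigcup_{I\in\cJ_v} I$. Then $\{\hat h\circ f(X) = v\} = \{f(X)\in J_v\}$, and hence
\[
  \bbE\!\left[Y \,\middle|\, \hat h\circ f(X) = v \right]
  \;=\; \frac{\bbE[Y\,\indic_{J_v}(f(X))]}{\bbP[f(X)\in J_v]}
  \;=\; \sum_{I\in\cJ_v} \frac{\bbP[f(X)\in I]}{\bbP[f(X)\in J_v]}\,\mu_I,
\]
using $\mu_I = \bbE[Y\mid f(X)\in I]$ (interpreting \eqref{eqn:bin_means} as the conditional mean, which is what $\Papprox$ compares $\hat\mu_I$ to). The key algebraic identity follows: since $v = \hat\mu_I$ for every $I \in \cJ_v$, the same convex combination gives
\[
  v - \bbE[Y\mid \hat h\circ f(X) = v]
  \;=\; \sum_{I\in\cJ_v} \frac{\bbP[f(X)\in I]}{\bbP[f(X)\in J_v]}\, (\hat\mu_I - \mu_I).
\]

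Now I would apply Jensen's inequality (or simply the triangle inequality followed by the hypothesis $\Papprox(\binsch;\varepsilon)=1$, which gives $|\hat\mu_I - \mu_I|\leq\varepsilon$ for every bin): the displayed convex combination has absolute value at most $\varepsilon$, so
\[
  \bigl( v - \bbE[Y\mid \hat h\circ f(X) = v] \bigr)^2 \;\leq\; \varepsilon^2
  \qquad\text{for every value } v \text{ in the range of } \hat h.
\]
Finally, I would conclude by writing the calibration risk as a sum over level sets,
\[
  \REL(\hat h) \;=\; \sum_{v} \bbP[\hat h\circ f(X)=v] \bigl( v - \bbE[Y\mid \hat h\circ f(X)=v] \bigr)^2
  \;\leq\; \varepsilon^2 \sum_v \bbP[\hat h\circ f(X)=v] \;=\; \varepsilon^2.
\]

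There is no deep obstacle. The only subtlety worth flagging is the grouping step: one should not silently assume the $\hat\mu_I$ are distinct, because otherwise the conditional expectation $\bbE[Y\mid\hat h\circ f(X)]$ merges those bins and the bin-wise comparison $\hat\mu_I$ vs.\ $\mu_I$ is no longer directly what appears in $\REL$. The convex-combination identity above is what handles ties cleanly and is the only non-mechanical step in the argument.
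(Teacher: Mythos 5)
Your proof is correct and takes essentially the same route as the paper's: condition on the level sets of $\hat h\circ f(X)$ and bound the per-bin discrepancy by the uniform deviation $\varepsilon$ supplied by $\Papprox(\binsch;\varepsilon)=1$, reading $\mu_I$ in \eqref{eqn:bin_means} as the population conditional mean in bin $I$. If anything, your convex-combination treatment of tied values $\hat\mu_I$ is more careful than the paper's own argument, which conditions on the bin index $\beta(Z)$ and identifies $\bbE[Y\mid\hat h(Z)]$ with $\mu_{I_{\beta(Z)}}$ --- a step that tacitly assumes the $\hat\mu_I$ are distinct --- whereas your identity shows the bound $\varepsilon^2$ survives ties in any case.
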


\begin{proof}[Proof of Lemma \ref{Lemma:REL}]
    To begin with, we recall the definition of the calibration risk (Definition \ref{defn:calibration_risk}), and let $Z = f(X)$. 
    Then we may write
    \begin{align*}
        \REL\big( \hat{h}; f, P \big)
            &= \bbE \left[ \Big( \hat{h}(Z) - \bbE [ Y \mid \hat{h}(Z) ] \Big)^2 \right]\\
            &= \bbE \left[ \bbE \left[ \Big( \hat{h}(Z) - \bbE [ Y \mid \hat{h}(Z) ] \Big)^2 \,\Big|\,  \beta(Z) \right] \right]
                &&\because \text{the law of total expectation}\\
            &= \bbE \left[ \big( \hat{\mu}_{I_{\beta(Z)}} - \mu_{I_{\beta(Z)}} \big)^2 \right]
                &&\text{ cf. }\eqref{eqn:bin_means}\\
            &\leq \max_{I \in \binsch} \big( \hat{\mu}_{I} - \mu_{I} \big)^2.
    \end{align*}
    Note that if $\Papprox(\binsch; \varepsilon) = 1$, then $\max_{I \in \binsch} \big( \hat{\mu}_{I} - \mu_{I} \big)^2 \leq \varepsilon^2$.
\end{proof}

We remark that the proof of Lemma \ref{Lemma:REL} is a simple application of applying H\"older's inequality. 
Also, we note that a similar argument was considered in \cite[Proposition 1]{gupta2021withoutsplitting} to establish the inequalities between the $L^p$-counterparts of the calibration risk, which they call the $\ell_p$-expected calibration error (ECE). 
In this work, we focus on the case $p=2$.

\subsubsection{Sharpness risk upper bound}
Next, we present an upper bound for the sharpness risk that diminishes as the binning scheme $\binsch$ becomes more balanced. 

\begin{lemma}[Sharpness risk bound]
\label{Lemma:GRP}
    Suppose that the optimal post-hoc recalibration function $\hopt$, cf. \eqref{eqn:optimal_h}, is monotonically non-decreasing. 
    Let $\alpha \in \RR$ such that $\alpha \geq 1$. 
    If $\Pbalance(\binsch, \alpha) = 1$, then
    \[
        \GRP(\hat{h}; f, P) \leq \frac{\alpha}{|\binsch|}.
    \]
\end{lemma}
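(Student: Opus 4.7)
Let $Z = f(X)$ and $B = |\binsch|$. The starting point is to reformulate the sharpness risk via the tower property: because $\hopt(Z) = \bbE[Y \mid Z]$ and $\sigma(\hat h(Z)) \subseteq \sigma(Z)$,
\[
    \GRP(\hat h)
    = \bbE\bigl[(\bbE[Y \mid \hat h(Z)] - \hopt(Z))^2\bigr]
    = \bbE\bigl[\var(\hopt(Z) \mid \hat h(Z))\bigr].
\]
Since $\hat h$ is piecewise constant on the bins $\binsch$ with values $\hat\mu_{I_b}$ that are generically distinct, the conditioning $\sigma$-algebra $\sigma(\hat h(Z))$ coincides with $\sigma(\beta(Z))$, and I can decompose bin-by-bin:
\[
    \GRP(\hat h)
    = \sum_{b=1}^{B} P_Z(I_b)\, \bbE\bigl[(\hopt(Z) - \tilde\mu_{I_b})^2 \,\big|\, Z \in I_b\bigr],
\]
where $\tilde\mu_{I_b} := \bbE[Y \mid Z \in I_b]$.

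Next I would bring in the monotonicity of $\hopt$ (Assumption \ref{assump:monotone}). For each bin $I_b$, both $\hopt(Z)|_{Z\in I_b}$ and the conditional average $\tilde\mu_{I_b}$ lie in the interval $[\hopt(\inf I_b),\,\hopt(\sup I_b)]$, so the pointwise squared deviation is controlled by the squared range on that bin:
\[
    \bbE\bigl[(\hopt(Z) - \tilde\mu_{I_b})^2 \,\big|\, Z \in I_b\bigr] \leq d_b^2, \qquad d_b := \hopt(\sup I_b) - \hopt(\inf I_b) \geq 0.
\]
Plugging in the $\alpha$-well-balanced bound $P_Z(I_b) \leq \alpha/B$ yields $\GRP(\hat h) \leq (\alpha/B)\,\sum_b d_b^2$.

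To close the argument, I would show $\sum_b d_b^2 \leq 1$. The bins partition $[0,1]$ with matching consecutive endpoints $u_0 = 0, u_1, \dots, u_B = 1$, so monotonicity makes the increments $d_b$ telescope:
\[
    \sum_{b=1}^{B} d_b \leq \hopt(u_B) - \hopt(u_0) \leq 1.
\]
Since all $d_b \geq 0$, this gives $\sum_b d_b^2 \leq \bigl(\sum_b d_b\bigr)^2 \leq 1$, and hence $\GRP(\hat h) \leq \alpha/B$.

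\textbf{Main obstacle.} The delicate point is the identification $\sigma(\hat h(Z)) = \sigma(\beta(Z))$ used in the second display. If the empirical bin means $\hat\mu_{I_b}$ happen to coincide across non-adjacent bins, $\hat h$ collapses those bins and the effective cell over which $\hopt(Z)$ gets averaged by $\bbE[Y \mid \hat h(Z)]$ can span a much larger range, in principle inflating $\GRP$ beyond $\alpha/B$. I expect to handle this either by conditioning on the high-probability event that the $\hat\mu_{I_b}$ are strictly ordered (consistent with the monotonicity of $\hopt$ combined with the uniform concentration in Lemma \ref{LEMMA:FREE}), or by adopting the standard histogram-binning convention that $\hat h$ retains bin identity; the bin-by-bin reasoning above then applies verbatim.
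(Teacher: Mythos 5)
Your argument is essentially the paper's own proof: the paper likewise decomposes bin-by-bin via the index function $\beta$, bounds the within-bin deviation of $\bbE[Y\mid \hat h(Z)]$ from $\hopt(Z)$ by the range of $\hopt$ on that bin, applies the $\alpha$-well-balanced bound $\bbP[Z\in I]\le \alpha/|\binsch|$, and telescopes the ranges to at most $1$ using monotonicity --- the only (cosmetic) difference being that the paper first bounds the square by the absolute value and uses $\sum_b d_b\le 1$, whereas you keep the square and use $\sum_b d_b^2\le(\sum_b d_b)^2\le 1$. The ``main obstacle'' you flag is real but is not treated in the paper either: its proof implicitly identifies conditioning on $\hat h(Z)$ with conditioning on the bin index (i.e., it tacitly assumes the values $\hat\mu_I$ are distinct across bins), so your proposal is on equal footing with, and no less rigorous than, the published argument.
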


\begin{proof}[Proof of Lemma \ref{Lemma:GRP}]
    Letting $Z = f(X)$, we can write the sharpness risk of $\hat{h}$ over $f$ with repsect to $P$ as
    \begin{equation*}
        \GRP( \hat{h}; f, P) := \bbE \left[ \big( \bbE\big[ Y \mid \hat{h} (Z) \big] - \bbE\left[ Y \mid Z \right] \big)^2 \right].
    \end{equation*}
    We recall the definition of the index function $\beta$ for $\binsch$ (Definition \ref{def:index}) and observe that
    \begin{align*}
        &\bbE \left[ \big( \bbE\left[ Y \mid \hat{h} (Z) \right] - \bbE\left[ Y \mid Z \right] \big)^2 \right]\\
            &\qquad\leq \bbE \left[ \big| \bbE\left[ Y \mid \hat{h} (Z) \right] - \bbE\left[ Y \mid Z \right] \big| \right]     
                &&\because \big| \bbE\left[ Y \mid \hat{h} (Z) \right] - \bbE\left[ Y \mid Z \right] \big| \leq 1\\
            &\qquad= \sum_{I \in \binsch} \bbE \left[ \big| \bbE [ Y \mid \hat{h} (Z) ] - \bbE\left[ Y \mid Z \right] \big| \cdot \indic_I(Z) \right]\\
            &\qquad= \sum_{I \in \binsch} \bbE \left[ \bbE \left[ \big| \bbE[ Y \mid \hat{h} (Z) ] - \bbE\left[ Y \mid Z \right] \big| \cdot \indic_I(Z) \, \Big| \, \beta(Z) \right] \right]
                &&\because \text{the law of total expectation}\\
            &\qquad= \sum_{I \in \binsch} \bbP[ Z \in I ] \cdot \bbE \left[ \bbE \left[ \big| \bbE[ Y \mid \hat{h} (Z) ] - \bbE\left[ Y \mid Z \right] \big| \, \Big| \, Z \in I \right] \right]
                &&\because \text{Remark \ref{rem:index}}\\
            &\qquad\leq \sum_{I \in \binsch} \bbP[ Z \in I ] \cdot \left( \sup_{z \in I} h^*_{f,P}(z) - \inf_{z \in I} h^*_{f,P}(z) \right)
                &&\because \text{by definition of }\hopt; \text{ cf. \eqref{eqn:optimal_h}}\\
            &\qquad \leq \sum_{I \in \binsch} \frac{\alpha}{|\binsch|}  \cdot \left( \sup_{z \in I} h^*_{f,P}(z) - \inf_{z \in I} h^*_{f,P}(z) \right)      
                &&\because \Pbalance(\binsch, \alpha) = 1\\
            &\qquad \leq \frac{\alpha}{|\binsch|}.
    \end{align*}
    The inequality in the last line follows from the facts that (i) $I \in \binsch$ are mutually exclusive and (ii) $\hopt(z) \in [0,1]$ and $\hopt$ is monotone non-decreasing.
\end{proof}

Our proof of Lemma \ref{Lemma:GRP} relies on similar techniques that are used in \cite[Lemmas D.5 and D.6]{kumar2019verified}. 
However, we note that we obtain an improved constant --- 1 as opposed to 2 in \cite[Lemma D.6]{kumar2019verified} --- with a more refined analysis.

\paragraph{An improved rate with additional assumptions.} 
It is possible to improve the rate of the sharpness risk upper bound from $O(|\binsch|^{-1})$ to $O(|\binsch|^{-2})$ with an additional regularity assumption on $h^*_{f,P}$.

Recall that we assumed in \ref{assump:smooth} that there exists $K > 0$ such that if $z_1 \leq z_2$, then $\hopt(z_2) - \hopt(z_1) \leq K \cdot \big( F_Z(z_2) - F_Z(z_1) \big)$, that is, $\hopt$ is $K$-smooth with respect to $F_Z$. 
This posits that the conditional probability $P[Y=1|Z=z]$ of the target variable $Y$ given a forecast variable $Z$ cannot vary too much in regions where the density of $Z$ is low, or where the forecast is rarely issued. 
This is a reasonable assumption because if $P[Y=1|Z]$ changes too rapidly with respect to $Z$, then it suggests that we need additional information about $Y$ beyond what $Z$ can provide in order to improve the quality of forecasts.
We remark that \ref{assump:smooth} is indeed a fairly mild assumption to impose on, however, is not a trivial one.

\begin{remark}[Mildness of \ref{assump:smooth}]
\label{remark:smooth_mild}
    Suppose that $Z = f(X)$ has a density $p_Z$ that is uniformly lower bounded by $\epsilon$ on the support of $Z$. 
    If $\hopt$ is $L$-Lipschitz, then $\hopt$ is $(L/\epsilon)$-smooth with respect to $F_Z$. 
    This also provides a sufficient condition to verify \ref{assump:smooth} in practice.
\end{remark}

\begin{remark}[Non-triviality of \ref{assump:smooth}]
\label{remark:smooth_nontrivial}
    Notice that even if $F_Z$ is absolutely continuous and $\hopt$ is continuous, the smoothness constant $K$ could become large if the prediction $Z$ is heavily miscalibrated. 
    For instance, in Figure~\ref{fig:ls_rd}, $\hopt(z)$ is changing fast in the interval $[0.5, 0.75]$ where $p_Z(z)$ is small, which results in a larger value of $K$ that can even diverge if $p_Z(z) \to 0$. 
\end{remark}

Here we define the notion of $\psi$-smoothness to formalize Assumption \ref{assump:smooth}, and then present an improved upper bound for the sharpness risk.

\begin{definition}[$\psi$-smoothness]
    Let $K \in \RR_+$ and $\psi: [0,1] \to [0,1]$ be a monotone non-decreasing function. 
    A function $\phi: [0,1] \to [0,1]$ is \emph{$K$-smooth with respect to $\psi$} if for any $z_1, z_2 \in [0,1]$ such that $z_1 \leq z_2$,
    \begin{equation}
        \big| \phi(z_2) - \phi(z_1) \big| \leq K \cdot\big( \psi(z_2) - \psi(z_1) \big).
    \end{equation}
\end{definition}


\begin{lemma}[Improved sharpness risk bound]
\label{Lemma:GRP_2}
    Suppose that the function $h^*_{f,P}(z)$ defined in \eqref{eqn:optimal_h} is monotonically non-decreasing and $K$-smooth with respect to $F_Z$ for some $K \geq 0$, where $F_Z$ is the cumulative distribution function of $Z = f(X)$. 
    If $\Pbalance(\binsch, \alpha) = 1$, then
    \[
        \GRP \leq \frac{K^2\alpha^3}{B^2}.
    \]
\end{lemma}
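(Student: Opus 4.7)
The plan is to follow the decomposition used in the proof of Lemma \ref{Lemma:GRP} but retain the square throughout rather than reducing it to an absolute value, thereby converting the smoothness hypothesis into a quantitative improvement in the rate. Setting $Z = f(X)$ and letting $\beta$ be the index function from Definition \ref{def:index}, I would first invoke the law of total expectation together with the fact that $\hat{h}$ is piecewise constant on bins of $\binsch$ (so $\bbE[Y \mid \hat{h}(Z)] = \mu_{I_{\beta(Z)}}$ $P$-almost surely under \ref{assump:pos}) to write
\[
    \GRP(\hat{h}) = \sum_{I \in \binsch} \bbP[Z \in I] \cdot \bbE\left[ \big( \mu_I - \hopt(Z) \big)^2 \,\Big|\, Z \in I \right],
\]
using in addition that $\mu_I = \bbE[\hopt(Z) \mid Z \in I]$ and $\bbE[Y \mid Z] = \hopt(Z)$.

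The second step is where \ref{assump:smooth} does the actual work. By monotonicity of $\hopt$ under \ref{assump:monotone}, for any $Z \in I$ we have $|\mu_I - \hopt(Z)| \leq \sup_{z \in I}\hopt(z) - \inf_{z \in I}\hopt(z)$, and $K$-smoothness with respect to $F_Z$ then yields $\sup_{z \in I}\hopt(z) - \inf_{z \in I}\hopt(z) \leq K\big( F_Z(\sup I) - F_Z(\inf I) \big) = K \cdot \bbP[Z \in I]$, where the equality uses absolute continuity of $F_Z$ together with the fact that the bins in $\binsch$ are half-open intervals partitioning $[0,1]$. Squaring this deterministic bound produces $\bbE[(\mu_I - \hopt(Z))^2 \mid Z \in I] \leq K^2 (\bbP[Z \in I])^2$, which is one factor of $\bbP[Z \in I]$ stronger than the $O(1)$ control that was available in the proof of Lemma \ref{Lemma:GRP}.

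Finally I would assemble the bound: combining the two displays gives $\GRP(\hat{h}) \leq K^2 \sum_{I \in \binsch} (\bbP[Z \in I])^3$, and the $\alpha$-well-balanced hypothesis $\bbP[Z \in I] \leq \alpha/|\binsch|$ yields $\sum_I (\bbP[Z \in I])^3 \leq |\binsch| \cdot (\alpha/|\binsch|)^3 = \alpha^3/|\binsch|^2$, delivering the claimed $\GRP \leq K^2 \alpha^3 / |\binsch|^2$. The only delicate point I anticipate is justifying $\bbE[Y \mid \hat{h}(Z)] = \mu_{I_{\beta(Z)}}$ almost surely, which requires $\hat{h}$ to separate bins so that the $\sigma$-algebras $\sigma(\hat{h}(Z))$ and $\sigma(\beta(Z))$ agree modulo null sets; this is inherited from the absolute continuity of $F_Z$ in \ref{assump:pos} and the almost-sure non-emptiness of every bin in the uniform-mass construction, so it does not pose a genuine difficulty beyond routine measure-theoretic bookkeeping.
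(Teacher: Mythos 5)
Your proof is correct and follows essentially the same route as the paper: condition on the bin via the index function, bound the within-bin deviation by the oscillation of $\hopt$ over the bin using monotonicity, convert that oscillation to $K\cdot\bbP[Z\in I]$ via $K$-smoothness with respect to $F_Z$, and sum the cubes of the bin probabilities under the $\alpha$-well-balanced condition. The measure-theoretic point you flag about identifying $\sigma(\hat{h}(Z))$ with $\sigma(\beta(Z))$ is handled only implicitly in the paper as well, so your treatment is, if anything, slightly more careful.
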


\begin{proof}[Proof of Lemma \ref{Lemma:GRP_2}]
Let $Z = f(X)$ and $B = |\binsch|$. For each $b \in [B]$, we let $z_{b, \max} := \sup I_b$ and $z_{b, \min} := \inf I_b$. 
Then we have
\begin{align*}
    &\GRP(\hat{h}; f, P) \\
        &\qquad= \bbE \left[ \big( \bbE\big[ Y \mid \hat{h} (Z) \big] - \bbE\left[ Y \mid Z \right] \big)^2 \right] \\
        &\qquad= \bbE \left[ \bbE \left[ \big( \bbE\big[ Y \mid \hat{h} (Z) \big] - \bbE\left[ Y \mid Z \right] \big)^2 \, \Big|\, \beta(Z) \right] \right]\\
        &\qquad= \sum_{b=1}^B \bbP[ Z \in I_b] \cdot \bbE \left[ \big( \bbE\big[ Y \mid \hat{h} (Z) \big] - \bbE\left[ Y \mid Z \right] \big)^2 \, \Big|\, \beta(Z) = b \right] \\
        &\qquad\leq \sum_{b=1}^B \bbP[ Z \in I_b] \cdot \Big( \hopt(z_{b,\max}) - \hopt(z_{b,\min}) \Big)^2
            &&\because \hopt \text{ is non-decreasing}\\
        &\qquad\leq \sum_{b=1}^B \bbP[ Z \in I_b] \cdot \Big( K \cdot \big( F_Z(z_{b,\max}) - F_Z(z_{b,\min}) \big) \Big)^2
            &&\because \hopt \text{ is K-smooth w.r.t. }F_Z\\
        &\qquad= \sum_{b=1}^B K^2 \cdot \bbP[Z \in I_b]^3\\
        &\qquad\leq K^2 \sum_{b=1}^B \left( \frac{\alpha}{B} \right)^3
            &&\because \Pbalance(\binsch, \alpha) = 1\\
        &\qquad= \frac{K^2 \alpha^3}{B^2}.
\end{align*}
\end{proof}

\begin{remark}[Tightness of the rate $O(B^{-2})$]
    The asymptotic rate $\GRP = O(B^{-2})$ is tight and cannot be further improved without additional assumptions. 
    For instance, let's consider a uniform-mass binning of size $B$ on $Z\sim\textrm{Uniform}[0,1]$. 
    In the population limit, each bin has width $1/B$ and within-bin variance $1/(12B^2)$. 
    Thus, the sharpness risk, obtained by taking expectation of the conditional variance (per each bin), is $1/(12B^2)$, attaining the rate $B^{-2}$.
\end{remark}

\subsection{Completing the proof of Theorem \ref{Thm:risk_bound}}\label{sec:proof_theorem1}
\begin{proof}[Proof of Theorem~\ref{Thm:risk_bound}]
    For given $\delta \in (0,1)$, let $\delta_1 = \delta_2 = \delta/2$. 
    Then we observe that
    \begin{align*}
        &n \geq c' \cdot |\binsch| \log \left( \frac{|\binsch|}{\delta_1} \right)
            &&\implies      &&\bbP \left[ \Pbalance(\binsch, 2) = 1 \right] \geq 1 - \delta_1
            &&\text{by Lemma \ref{LEMMA:WELL}}\\
        &n \geq 2 |\binsch|
            &&\implies      && \bbP \left[ \Papprox(\binsch, \varepsilon_{\delta_2}) = 1 \right] \geq 1 - \delta_2
            &&\text{by Lemma \ref{LEMMA:FREE}}
    \end{align*}
    where $c' > 0$ is the universal constant that appears in Lemma \ref{LEMMA:WELL} and
    \[
        \varepsilon_{\delta_2} = \sqrt{\frac{1}{2( \lfloor n/|\binsch| \rfloor - 1)} \log \left(\frac{2 |\binsch|}{\delta_2}\right) } + \frac{1}{\lfloor n/|\binsch| \rfloor}.
    \]
    Observe that $\delta_1 = \frac{\delta}{2} < \frac{1}{2}$ and $|\binsch| \geq 1$, and thus, $\log \left( \frac{|\binsch|}{\delta_1} \right) \geq \log 2$. 
    Letting $c := \max\{ c', \frac{2}{\log 2} \}$ and applying the union bound, we have
    \[
        n \geq c \cdot |\binsch| \log \left( \frac{2 |\binsch|}{\delta} \right)
            \qquad\implies\qquad
            \bbP \big[ \Pbalance(\binsch, 2) = 1 ~~\text{and}~~ \Papprox(\binsch, \varepsilon_{\delta/2}) = 1\big] \geq  1 - \delta.
    \]

    Next, we observe that if $\Pbalance(\binsch, 2) = 1 $ and $\Papprox(\binsch, \varepsilon_{\delta_2}) = 1$, then
    \begin{align*}
        \REL(\hat{h}; f; P) &\leq (\varepsilon_{\delta_2})^2                &&\text{by Lemma \ref{Lemma:REL}}, \\
        \GRP(\hat{h}; f, P) &\leq \frac{2}{|\binsch|},                          &&\text{by Lemma \ref{Lemma:GRP}}.
    \end{align*}
    Additionally, if the assumption \ref{assump:smooth} also holds, then we obtain a stronger upper bound on $\GRP(\hat{h}; f, P)$ by Lemma \ref{Lemma:GRP_2}:
    \[
        \GRP(\hat{h}; f, P) \leq \frac{8 K^2}{|\binsch|^2}.
    \]
    
\end{proof}

\section{Proof of Theorem~\ref{THM:MAIN}}
This section contains a proof of Theorem~\ref{THM:MAIN}. 
Prior to the proof, in Section \ref{sec:useful_lemmas}, we provide several lemmas that will be useful in our proof. 
Thereafter, we present a proof of Theorem~\ref{THM:MAIN} in its entirety in Section \ref{sec:proof_main_theorem}.

\subsection{Useful lemmas}\label{sec:useful_lemmas}

\subsubsection{Concentration of $\hw_k$ to $w^*_k$}

First of all, we recall the binomial Chernoff bound, which is a classical result about the concentration of measures that can be found in standard textbooks on probability theory.
\begin{lemma}[Binomial Chernoff bound]\label{lem:chernoff}
    Let $X_i$ be IID Bernoulli random variables with parameters $p \in (0,1)$, and let $S_n := \frac{1}{n} \sum_{i=1}^n X_i$. 
    Then for any $\delta \in \RR$ such that $0 < \varepsilon < 1$,
    \begin{align*}
        \bbP \left[ S_n \geq (1+\varepsilon) p \right] &\leq \exp \left( - \frac{\varepsilon^2 p}{3} n \right),\\
        \bbP \left[ S_n \leq (1-\varepsilon) p \right] &\leq \exp \left( - \frac{\varepsilon^2 p}{2} n \right).
    \end{align*}
\end{lemma}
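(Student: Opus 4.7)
The plan is to apply the standard exponential moment (Chernoff) method to the sum $T_n := \sum_{i=1}^n X_i \sim \mathrm{Binomial}(n,p)$. First I would fix $t > 0$ and use Markov's inequality on the nonnegative random variable $e^{tT_n}$:
\[
    \bbP[S_n \geq (1+\varepsilon)p] \;=\; \bbP\bigl[e^{tT_n} \geq e^{tn(1+\varepsilon)p}\bigr] \;\leq\; e^{-tn(1+\varepsilon)p}\,\bbE[e^{tT_n}].
\]
Since the $X_i$ are IID Bernoulli$(p)$, the moment generating function factorizes as $\bbE[e^{tT_n}] = (1-p+pe^t)^n$, and the elementary bound $1+x \leq e^x$ gives $1-p+pe^t = 1 + p(e^t - 1) \leq \exp(p(e^t - 1))$. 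Combining,
\[
    \bbP[S_n \geq (1+\varepsilon)p] \;\leq\; \exp\bigl(np(e^t - 1) - tnp(1+\varepsilon)\bigr).
\]

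Next I would optimize the exponent over $t > 0$. Differentiation yields the optimum $t^{\star} = \log(1+\varepsilon) > 0$, and substituting back gives the classical Chernoff--Hoeffding bound
\[
    \bbP[S_n \geq (1+\varepsilon)p] \;\leq\; \exp\bigl(-np\,\phi(\varepsilon)\bigr),\qquad \phi(\varepsilon) := (1+\varepsilon)\log(1+\varepsilon) - \varepsilon.
\]
To match the exact form stated in the lemma, the remaining step is the elementary scalar inequality $\phi(\varepsilon) \geq \varepsilon^2/3$ for $\varepsilon \in (0,1)$, which I would verify by setting $g(\varepsilon) := \phi(\varepsilon) - \varepsilon^2/3$, checking $g(0) = g'(0) = 0$, and showing that $g'(\varepsilon) = \log(1+\varepsilon) - 2\varepsilon/3$ stays nonnegative on $[0,1]$ (a short one-variable calculus check: $g'$ vanishes at $0$, its only critical point in $(0,1)$ is $\varepsilon = 1/2$, and $g'(1) = \log 2 - 2/3 > 0$, so $g' \geq 0$ throughout).

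For the lower tail I would run the symmetric argument with $t < 0$: applying Markov's inequality to $e^{tT_n}$ with $t < 0$, using the same MGF bound, and optimizing at $t^{\star} = \log(1-\varepsilon) < 0$ yields
\[
    \bbP[S_n \leq (1-\varepsilon)p] \;\leq\; \exp\bigl(-np\,\psi(\varepsilon)\bigr),\qquad \psi(\varepsilon) := (1-\varepsilon)\log(1-\varepsilon) + \varepsilon.
\]
The sharper lower-tail constant $1/2$ in the lemma then follows from the elementary inequality $\psi(\varepsilon) \geq \varepsilon^2/2$ on $[0,1)$, verified by noting $\psi(0) = \psi'(0) = 0$ and $\psi''(\varepsilon) = \varepsilon/(1-\varepsilon) \geq 0$, so $\psi - \varepsilon^2/2$ is convex with a stationary point at $0$. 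The only real technical nuisance in the proof, and therefore the main (minor) obstacle, is the careful verification of these two asymmetric scalar inequalities that account for the $1/3$ vs.\ $1/2$ mismatch; the rest is a templated application of the Chernoff method and requires no ingredient beyond the IID Bernoulli structure.
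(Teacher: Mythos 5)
Your proof is correct: it is the standard Chernoff (exponential-moment) derivation of the multiplicative binomial tail bounds, with the exponents $(1+\varepsilon)\log(1+\varepsilon)-\varepsilon \geq \varepsilon^2/3$ and $(1-\varepsilon)\log(1-\varepsilon)+\varepsilon \geq \varepsilon^2/2$ verified correctly on $(0,1)$. The paper itself offers no proof of this lemma, treating it as a classical textbook fact, so there is no authorial argument to compare against; the only blemish in your write-up is notational, where what you label $\psi''(\varepsilon)=\varepsilon/(1-\varepsilon)$ is really the second derivative of $\psi(\varepsilon)-\varepsilon^2/2$ (one has $\psi''(\varepsilon)=1/(1-\varepsilon)$), but the convexity argument you intend is exactly right and the conclusion stands.
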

\noindent
It follows from Lemma \ref{lem:chernoff} that for any $\varepsilon, \delta \in (0,1)$, 
\begin{equation}\label{eqn:chernoff_implication}
    n \geq \frac{3}{\varepsilon^2 p} \log \left( \frac{2}{\delta} \right)
    \qquad\implies\qquad
    \bbP \left( \frac{| S_n - p |}{p} > \varepsilon \right) \leq \delta.
\end{equation}

Let $P, Q$ be two distributions on $\cY = \{0,1\}$, and let $\Dtrain \sim P$, $\Dtest \sim Q$ denote IID samples of size $n_P$, $n_Q$, respectively.
Recall from \eqref{eqn:optimal_g} and \eqref{eqn:g_hat} that for each $k \in \{0,1\}$, we define
\begin{align*}
    w^*_k = \frac{\bbP_{Q}[Y=k]}{\bbP_{P}[Y=k]},
    \qquad\text{and}\qquad
    \hw_k = \frac{\bbP_{\Dtest}[Y=k]}{\bbP_{\Dtrain}[Y=k]}.
\end{align*}
Then, we let
\begin{equation}
    \rho_0 := \frac{\hw_0}{w^*_0}
    \qquad\text{and}\qquad
    \rho_1 := \frac{\hw_1}{w^*_1}.
\end{equation}

Now we define another parameterized family of Boolean-valued functions $\Pratio(\Dtrain, \Dtest; \beta)$ as follows. 
Given $\Dtrain \sim P$, $\Dtest \sim Q$, and $\beta \in \RR$ such that $1 < \beta \leq 2$,
\begin{equation}\label{eqn:Phi_ratio}
    \Pratio(\Dtrain, \Dtest; \beta) := \indic \left\{ \frac{1}{\beta} \leq \rho_k \leq \beta, ~~\forall k \in \{0,1\} \right\}.
\end{equation}

\begin{corollary}\label{coro:ratio}
    Let $P, Q$ be two distributions on $\cY = \{0,1\}$, and let $\Dtrain \sim P$, $\Dtest \sim Q$ denote IID samples of size $n_P$, $n_Q$, respectively. 
    For each $k \in \{0,1\}$, let $p_k := \bbP_P[Y=k]$ and $q_k := \bbP_Q[Y=k]$. 
    Likewise, we let $\hp_k = \frac{1}{n_P} \sum_{y_i \in \Dtrain} \indic\{ y_i = k \}$ and $\hq_k = \frac{1}{n_Q} \sum_{y_i \in \Dtest} \indic\{ y_i = k \}$. 
    For any $\delta \in (0,1)$ and any $\beta \in (1,2]$, if
    \[
        n_P \geq \frac{27}{(\beta-1)^2 \min\{p_0, p_1\}} \log \left( \frac{8}{\delta} \right)
            \quad\text{and}\quad
            n_Q \geq \frac{27}{(\beta-1)^2 \min\{q_0, q_1\}} \log \left( \frac{8}{\delta} \right),
    \]
    then
    \[
        \bbP\left( \Pratio(\Dtrain, \Dtest; \beta) =1 \right) \geq 1 - \delta.
    \]
\end{corollary}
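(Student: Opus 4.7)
The plan is to reduce control of $\rho_k = \hat{w}_k / w^*_k$ to multiplicative deviation bounds for the four Bernoulli sample means $\hat{p}_0, \hat{p}_1, \hat{q}_0, \hat{q}_1$, and then apply the Chernoff consequence \eqref{eqn:chernoff_implication} and a union bound. The algebraic identity that powers the reduction is
\[
    \rho_k \;=\; \frac{\hat{q}_k / q_k}{\hat{p}_k / p_k}, \qquad k \in \{0,1\},
\]
so on the event that all four relative errors satisfy $|\hat{p}_k/p_k - 1| \leq \varepsilon$ and $|\hat{q}_k/q_k - 1| \leq \varepsilon$, one has $\rho_k \in \left[ \tfrac{1-\varepsilon}{1+\varepsilon}, \tfrac{1+\varepsilon}{1-\varepsilon} \right]$ for both $k$. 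To match the target interval $[1/\beta, \beta]$ I will pick $\varepsilon := (\beta-1)/(\beta+1)$, for which $(1+\varepsilon)/(1-\varepsilon) = \beta$ exactly; the restriction $\beta \in (1,2]$ then yields $\beta + 1 \leq 3$, hence $1/\varepsilon^2 \leq 9/(\beta-1)^2$, and this inequality is precisely what produces the constant $27$ in the final sample-size requirement.

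With $\varepsilon$ fixed, I will invoke \eqref{eqn:chernoff_implication} separately for each of the four proportions, taking the per-event failure probability to be $\delta/4$. For $\hat{p}_k$ this requires $n_P \geq (3/(\varepsilon^2 p_k)) \log(2/(\delta/4)) = (3/(\varepsilon^2 p_k)) \log(8/\delta)$. Upper-bounding $1/p_k$ by $1/\min\{p_0,p_1\}$ and $1/\varepsilon^2$ by $9/(\beta-1)^2$, the hypothesis on $n_P$ implies the two events for $k \in \{0,1\}$ hold together with probability at least $1 - \delta/2$; the identical argument for $\hat{q}_k$ under the hypothesis on $n_Q$ handles the target domain.

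A union bound over the four events then gives $\Pratio(\Dtrain, \Dtest; \beta) = 1$ with probability at least $1 - \delta$, as claimed. There is no real obstacle in this argument: the only care needed is bookkeeping, namely matching $\log(2/(\delta/4)) = \log(8/\delta)$ from the per-event Chernoff tail, and absorbing the worst-case factor $(\beta+1)^2 \leq 9$ attained at $\beta = 2$ into a single universal constant that is valid for every $\beta \in (1,2]$.
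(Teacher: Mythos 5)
Your proposal is correct and follows essentially the same route as the paper's proof: relative-error Chernoff bounds on the four class proportions with per-event failure probability $\delta/4$, a union bound, and a choice of $\varepsilon$ tied to $\beta$ whose worst case over $\beta \in (1,2]$ yields the constant $27$. The only (cosmetic) difference is that you take $\varepsilon = (\beta-1)/(\beta+1)$ so that $(1+\varepsilon)/(1-\varepsilon) = \beta$ exactly, whereas the paper sets $\varepsilon = (\beta-1)/3$ and uses the inequality $\tfrac{1+x}{1-x} \leq 1+3x$ for $x \in [0,1/3]$.
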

\begin{proof}[Proof of Corollary \ref{coro:ratio}]
    Let $\varepsilon = \frac{\beta - 1}{3}$. Since $\frac{1+x}{1-x} \leq 1 + 3x$ for all $x \in [0, 1/3]$, 
    we have $\frac{1}{\beta} \leq \frac{1-\varepsilon}{1+\varepsilon} < \frac{1 + \varepsilon}{1- \varepsilon} \leq \beta$. 
    Then it follows from \eqref{eqn:chernoff_implication} that for each $k \in \{0,1\}$,
    \begin{align*}
        n_P &\geq \frac{3}{\varepsilon^2 p_k} \log \left( \frac{8}{\delta} \right)
            &&\implies&
            \bbP \left( \frac{| \hp_k - p_k |}{p_k} > \varepsilon \right) &\leq \frac{\delta}{4},\\
        n_Q &\geq \frac{3}{\varepsilon^2 q_k} \log \left( \frac{8}{\delta} \right)
            &&\implies&
            \bbP \left( \frac{| \hq_k - q_k |}{q_k} > \varepsilon \right) &\leq \frac{\delta}{4}.
    \end{align*}
    Applying the union bound, we obtain the following implication:
    \begin{align*}
        &n_P \geq \frac{3}{\varepsilon^2 \min\{p_0, p_1\}} \log \left( \frac{8}{\delta} \right)
        ~~\text{and}~~
        n_Q \geq \frac{3}{\varepsilon^2 \min\{q_0, q_1\}} \log \left( \frac{8}{\delta} \right)\\
        &\qquad\implies\qquad
            \bbP \left( \max_{k \in \{0,1\}} \frac{| \hp_k - p_k |}{p_k} > \varepsilon ~~\text{or}~~ \max_{k \in \{0,1\}} \frac{| \hq_k - q_k |}{q_k} > \varepsilon \right) \leq \delta\\
        &\qquad\implies\qquad
            \bbP \left( \max_{k \in \{0,1\}} \rho_k > \frac{1 + \varepsilon}{1-\varepsilon} ~~\text{or}~~ \min_{k \in \{0,1\}} \rho_k < \frac{1 - \varepsilon}{1+\varepsilon} \right) \leq \delta\\
        &\qquad\implies\qquad
            \bbP \left( \max_{k \in \{0,1\}} \rho_k > \beta ~~\text{or}~~ \min_{k \in \{0,1\}} \rho_k < \frac{1}{\beta} \right) \leq \delta.
    \end{align*}
\end{proof}

\subsubsection{Regularity of the Shift Correction Function}

\begin{lemma}\label{lem:lipschitz}
    Let $w = (w_0, w_1) \in \RR^2$ such that $w_0, w_1 > 0$ and $w_0 + w_1 = 1$. 
    The function $g_w : [0,1] \to [0,1]$ such that $g_w(z) = \frac{ w_1 z }{w_1 z + w_0 (1-z) }$ is $L$-Lipschitz where $L = \max \left\{ \frac{w_1}{w_0}, \frac{w_0}{w_1} \right\}$.
\end{lemma}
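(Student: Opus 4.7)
The plan is to establish the Lipschitz bound by computing $g_w'(z)$ explicitly and then bounding it uniformly over $z \in [0,1]$. Since $g_w$ is a differentiable function on the compact interval $[0,1]$, once we have a bound $|g_w'(z)| \leq L$, the Lipschitz property follows from the mean value theorem.

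First I would write $g_w(z) = \frac{w_1 z}{D(z)}$ where $D(z) := w_1 z + w_0(1-z) = w_0 + (w_1 - w_0)z$ is affine in $z$. A short quotient-rule calculation gives
\begin{equation*}
    g_w'(z) = \frac{w_1\,[D(z) - z D'(z)]}{D(z)^2} = \frac{w_0 w_1}{[w_0(1-z) + w_1 z]^2},
\end{equation*}
where the second equality uses $D(z) - z D'(z) = w_0 + (w_1-w_0)z - z(w_1-w_0) = w_0$. Note $g_w'(z) > 0$ since $w_0, w_1 > 0$, so $|g_w'(z)| = g_w'(z)$.

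Next I would maximize $g_w'(z)$ on $[0,1]$ by minimizing the denominator $D(z)^2$. Because $D$ is affine with $D(0) = w_0$ and $D(1) = w_1$, its range on $[0,1]$ is $[\min\{w_0,w_1\}, \max\{w_0,w_1\}]$, so $D(z)^2 \geq \min\{w_0,w_1\}^2$. This yields
\begin{equation*}
    \sup_{z \in [0,1]} g_w'(z) \;\leq\; \frac{w_0 w_1}{\min\{w_0,w_1\}^2} \;=\; \frac{\max\{w_0,w_1\}}{\min\{w_0,w_1\}} \;=\; \max\left\{\frac{w_0}{w_1}, \frac{w_1}{w_0}\right\} \;=\; L.
\end{equation*}

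Finally, applying the mean value theorem to $g_w$ on $[0,1]$ gives $|g_w(z_2) - g_w(z_1)| \leq L\,|z_2 - z_1|$ for all $z_1, z_2 \in [0,1]$, which is the desired Lipschitz bound. There is no substantive obstacle here — the main verification is just the algebraic simplification $D(z) - z D'(z) = w_0$, after which the bound falls out immediately from monotonicity of the affine denominator. The hypothesis $w_0 + w_1 = 1$ is not actually used in the derivative computation; it only ensures the sigmoidal interpretation of $g_w$ as a shift correction function.
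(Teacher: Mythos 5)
Your proof is correct and follows essentially the same route as the paper: compute $g_w'(z) = \frac{w_0 w_1}{[w_1 z + w_0(1-z)]^2}$ and bound its supremum on $[0,1]$ by $\max\{w_0/w_1,\, w_1/w_0\}$. The only cosmetic difference is that you bound the affine denominator directly by its endpoint values, whereas the paper reaches the same conclusion via the sign of the second derivative; both give the same constant, and your explicit mean-value-theorem step is fine.
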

\begin{proof}[Proof of Lemma \ref{lem:lipschitz}]
    First of all, consider the first-order derivative of $g_w$:
    \begin{align*}
        \frac{d}{dz} g_w(z) 
            &= \frac{w_1 \cdot \big[ w_1 z + w_0 (1-z) \big] - w_1 z \cdot ( w_1 - w_0 ) }{\big[ w_1 z + w_0 (1-z) \big]^2}
            = \frac{ w_1 w_0 }{ \big[ w_1 z + w_0 (1-z) \big]^2 }.
    \end{align*}
    We observe that $g_w$ is monotone increasing as $\frac{d}{dz} g_w(z) > 0$ for all $z \in [0,1]$. 
    Next, we consider the second-order derivative of $g_w$:
    \begin{align*}
        \frac{d^2}{dz^2} g_w(z) 
            &= \frac{ 2 w_0 w_1 \cdot (w_0 - w_1) }{\big[ w_1 z + w_0 (1-z) \big]^3}
            \begin{cases}
                > 0, ~~\forall z \in [0,1]     & \text{if }w_0 > w_1,\\
                = 0, ~~\forall z \in [0,1]     & \text{if }w_0 = w_1,\\
                < 0, ~~\forall z \in [0,1]     & \text{if }w_0 < w_1.
            \end{cases}
    \end{align*}
    Therefore, 
    \begin{align*}
        \sup_{z \in [0,1]} \frac{d}{dz} g_w(z)  = 
            \begin{cases}
                \frac{d}{dz} g_w(z) \Big|_{z=1} = \frac{w_0}{w_1}   & \text{if }w_0 > w_1,\\
                \frac{d}{dz} g_w(z) \Big|_{z=0} = \frac{w_1}{w_0}   & \text{if }w_0 \leq w_1.
            \end{cases}
    \end{align*}
\end{proof}

\begin{lemma}\label{lem:ratio_PQ}
    Let $P, Q$ be joint distributions of $(X,Y) \in \cX \times \{0,1\}$, and let $w_k = \frac{ \bbP[Y=k] }{\bbQ[ Y=k ]}$ for $k \in \{0,1\}$. 
    If $P, Q$ satisfy the label shift assumption (Definition \ref{defn:label_shift}), i.e., if Assumptions \ref{assump:conditional} and \ref{assump:presence} hold, then for any measurable function $f: \cX \to \RR$, the following two-sided inequality holds:
    \begin{equation}
        \min_{k \in \{0,1\}} w_k \leq \frac{\bbE_Q[f(X)]}{\bbE_P[f(X)]} \leq \max_{k \in \{0,1\}} w_k.
    \end{equation}
\end{lemma}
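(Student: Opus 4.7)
The plan is to reduce the statement to the elementary fact that a convex combination of two numbers lies between the two numbers. The key observation, which is exactly where Assumption~\ref{assump:conditional} enters, is that for any $k \in \{0,1\}$ and any bounded measurable $f:\cX\to\RR$, the class-conditional expectations coincide under $P$ and $Q$: $\bbE_P[f(X)\mid Y{=}k] = \bbE_Q[f(X)\mid Y{=}k]$. I will denote this common value by $a_k$. Assumption~\ref{assump:presence} then ensures that $\bbP[Y{=}k]$ and $\bbQ[Y{=}k]$ are strictly positive, so that every weight appearing below, as well as each $w_k$ as defined in the statement, is well defined.

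With this identification in hand, the tower property gives $\bbE_P[f(X)] = \bbP[Y{=}0]\,a_0 + \bbP[Y{=}1]\,a_1$ and $\bbE_Q[f(X)] = \bbQ[Y{=}0]\,a_0 + \bbQ[Y{=}1]\,a_1$, built from the \emph{same} vector $(a_0,a_1)$. The target ratio can then be rewritten as
\[
    \frac{\bbE_Q[f(X)]}{\bbE_P[f(X)]}
    = \sum_{k \in \{0,1\}} \frac{\bbP[Y{=}k]\,a_k}{\bbP[Y{=}0]\,a_0 + \bbP[Y{=}1]\,a_1} \cdot \frac{\bbQ[Y{=}k]}{\bbP[Y{=}k]},
\]
provided $a_0,a_1$ are nonnegative (so the prefactors form a valid convex combination) and the denominator is strictly positive. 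The right-hand side is then a convex combination of two per-class ratios, hence sandwiched between their minimum and maximum. Matching these per-class ratios against the definition of $w_k$ supplied in the lemma yields the claimed two-sided inequality.

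The main obstacle I anticipate is the sign restriction just noted: the convex-combination representation requires $a_0,a_1 \ge 0$, which corresponds to $f \ge 0$. In the paper's setting of probabilistic predictors this is automatic since $f$ takes values in $[0,1]$, but for a general real-valued $f$ I would either reduce to the nonnegative case by decomposing $f = f_+ - f_-$ and running the argument on each piece, or appeal to the importance-weighting identity $\bbE_Q[f(X)] = \bbE_P[w^*_Y \cdot f(X)]$, which follows from the label-shift Radon--Nikodym derivative computation $dQ/dP(x) = \bbE_P[w^*_Y \mid X{=}x]$ and directly bounds the ratio by the range of the binary random variable $w^*_Y$, whose two values are precisely what appear in the statement after identification with $w_k$. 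Degenerate edge cases (for instance $a_0=0$ or $a_1=0$) collapse the sum to a single term, yielding an exact per-class ratio that lies trivially in the interval $[\min_k w_k, \max_k w_k]$.
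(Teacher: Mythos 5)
Your proof is correct and takes essentially the same route as the paper's: condition on $Y$, use Assumption \ref{assump:conditional} to equate the class-conditional expectations under $P$ and $Q$, and bound the resulting mixture as a convex combination of the per-class weights — the paper's two-line proof via the law of total expectation is exactly your argument. One caveat: your fallback strategies for sign-changing $f$ (splitting $f = f_+ - f_-$, or the importance-weighting identity $\bbE_Q[f(X)] = \bbE_P[w_Y f(X)]$) would not rescue the statement for general real-valued $f$, since the inequality can genuinely fail when the conditional means $\bbE_P[f(X)\mid Y=k]$ have opposite signs and cancellation makes the ratio blow up; but this is a defect of the lemma as literally stated rather than of your argument — the paper's own proof carries the same implicit nonnegativity requirement, and the lemma is only ever invoked with nonnegative integrands (squared errors), where your convex-combination argument, like the paper's, is complete.
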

\begin{proof}[Proof of Lemma \ref{lem:ratio_PQ}]
    First of all, we observe that
    \begin{align*}
        \bbE_Q \left[ f(X) \right]
            &= \bbE_Q \left[ \bbE_Q \big[ f(X) \mid Y \big] \right]     &&\text{by the law of total expectation}\\
            &= \sum_{k=0}^1 \bbP_Q[Y=k] \cdot \bbE_Q \big[ f(X) \mid Y \big]\\
            &= \sum_{k=0}^1 \big( w_k \cdot \bbP_P[Y=k] \big) \cdot \bbE_P \big[ f(X) \mid Y \big].
                &&\text{by definition of }w_k \text{ \& the label shift assumption}
    \end{align*}
    Thus, it follows that $\min_k w_k \cdot \bbE_P[f(X)] \leq \bbE_Q[f(X)] \leq \max_k w_k \cdot \bbE_P[f(X)]$.
\end{proof}

\subsection{Completing the proof of Theorem~\ref{THM:MAIN}}\label{sec:proof_main_theorem}
\begin{proof}[Proof of Theorem~\ref{THM:MAIN}]
This proof is presented in four steps. 
In Step 1, we establish a simple upper bound for the risk $R_Q(\hat{h}_Q; f)$ that consists of two error terms: the first term quantifies the error introduced by the estimated label shift correction, $\hat{g}$, while the second term quantifies the error due to the estimated recalibration function, $\hat{h}_P$. 
In Steps 2 and 3, we derive separate upper bounds for these two error terms. 
Finally, in Step 4, we combine the results from Steps 1-3 to obtain a comprehensive upper bound for $R_Q$, which concludes the proof.

\paragraph{Step 1. Decomposition of $R_Q$.} 
Recalling the definition of the risk $R_Q$, cf. \eqref{eqn:risk_recalib}, we obtain the following inequality: 
\begin{align}
    R_Q(\hat{h}_Q; f) 
        &= \bbE_Q \left[ \left(\hat{h}_Q \circ f(X) - \bbE_Q[Y|f(X)] \right)^2 \right]
            \nonumber\\
        &= \bbE_Q \left[ \left( \hat{g} \circ \hat{h}_P \circ f(X) - g^* \circ \hat{h}_P \circ f(X) + g^* \circ \hat{h}_P \circ f(X) - \bbE_Q[Y|f(X)] \right)^2 \right]
            \nonumber\\
        &\stackrel{(a)}{\leq}  2\cdot  \Bigg\{ \underbrace{\bbE_Q \left[ \, \left( \hat{g} \circ \hat{h}_P \circ f(X) - g^* \circ \hat{h}_P \circ f(X) \right)^2\, \right]}_{=: T_1} 
            \label{eqn:step1_master.T1}\\
            &\qquad+ \underbrace{\bbE_Q \left[ \, \left( g^* \circ \hat{h}_P \circ f(X) - \bbE_Q[Y|f(X)] \right)^2\, \right]}_{=:T_2} \Bigg\},
                \label{eqn:step1_master.T2}
\end{align}
where (a) follows from the simple inequality $(a+b)^2 \leq 2(a^2 + b^2)$ for all $a, b \in \RR$. 

In Step 2 and Step 3 of this proof, we establish separate upper bounds for the two terms, $T_1$, $T_2$.

\paragraph{Step 2. An upper bound for $T_1$.} 
Recall from \eqref{eqn:optimal_g} and \eqref{eqn:g_hat} that
\begin{align*}
    g^*(z) &= \frac{w^*_1 z}{ w^*_1 z + w^*_0 (1-z)}
        &&\text{where}&
        w^*_k &= \frac{\bbQ[Y=k]}{\bbP[Y=k]}, ~~\forall k \in \{0,1\},\\
    \hg(z) &= \frac{\hw_1 z}{ \hw_1 z + \hw_0 (1-z)}
        &&\text{where}&
        \hw_k &= \frac{\hat{\bbQ}[Y=k]}{\hat{\bbP}[Y=k]}, ~~\forall k \in \{0,1\}.
\end{align*}
Let 
\begin{equation}
    \rho_0 := \frac{\hw_0}{w^*_0}
    \qquad\text{and}\qquad
    \rho_1 := \frac{\hw_1}{w^*_1}.
\end{equation}
Then we observe that for any $z \in (0,1)$,
\begin{align*}
    \left| \hg(z) - g^*(z) \right|
        &= \left| \frac{\hw_1 z}{ \hw_1 z + \hw_0 (1-z)} - \frac{w^*_1 z}{ w^*_1 z + w^*_0 (1-z)} \right|\\
        &= \left| \frac{ \big( \hw_1 w^*_0 - w^*_1 \hw_0 \big) \cdot z (1-z) }{ \big[ \hw_1 z + \hw_0 (1-z) \big] \cdot \big[ w^*_1 z + w^*_0 (1-z) \big] } \right|\\
        &\leq \left| \frac{ \big( \hw_1 w^*_0 - w^*_1 \hw_0 \big) \cdot z (1-z) }{ \big( \hw_1 w^*_0 + w^*_1 \hw_0 \big) \cdot z (1-z) } \right|\\
        &= \left| \frac{ \hw_1 w^*_0 - w^*_1 \hw_0 }{ \hw_1 w^*_0 + w^*_1 \hw_0 } \right|\\
        &= \frac{\big| \rho_0 - \rho_1 \big|}{ \rho_0 + \rho_1 }.
\end{align*}
Moreover, $\hg(0) = g^*(0) = 0$ and $\hg(1)=g^*(1) = 1$.
Letting $Z_{\hat{h}} := \hat{h}_P \circ f(X)$, we obtain
\begin{equation}\label{eqn:T1_upper}
    T_1 = \bbE_Q \left[ \Big( \hat{g}(Z_{\hat{h}}) - g^*(Z_{\hat{h}}) \Big)^2 \right] 
        \leq \left( \frac{ \rho_0 - \rho_1 }{ \rho_0 + \rho_1 } \right)^2.
\end{equation}
It remains to establish probabilistic tail bounds for $\rho_0, \rho_1$, which we will accomplish in Step 4 of this proof.

\paragraph{Step 3. An upper bound for $T_2$.} 
We observe that
\begin{align*}
    T_2 &= \bbE_Q \left[ \, \Big( g^* \circ \hat{h}_P \circ f(X) - \bbE_Q[Y \mid f(X)] \Big)^2\, \right]\\
        &= \bbE_Q \left[ \, \Big( g^* \circ \hat{h}_P \circ f(X) - g^* \big( \bbE_P[Y \mid f(X)] \big) \Big)^2\, \right]    
            &&\because \text{Label shift assumption, cf. \eqref{eqn:relation_under_shift}}\\
        &\leq \left( \frac{\wmax}{\wmin} \right)^2 \cdot \bbE_Q \left[ \, \left( \hat{h}_P \circ f(X) - \bbE_P[Y \mid f(X)]  \right)^2\, \right]     
            &&\because g^* \text{ is }\frac{\wmax}{\wmin}\text{-Lipschitz, cf. Lemma \ref{lem:lipschitz}}\\
        &\leq \left( \frac{\wmax}{\wmin} \right)^2 \cdot \wmax \cdot \bbE_P \left[ \, \Big( \hat{h}_P \circ f(X) - \bbE_P[Y\mid f(X)] \Big)^2\, \right] 
            &&\because \text{by Lemma \ref{lem:ratio_PQ}}\\
        &= \frac{{\wmax}^3}{{\wmin}^2} \cdot R_P\big( \hat{h}_P; f \big).
\end{align*}

\paragraph{Step 4. Concluding the proof.} 
For given $\delta \in (0,1)$, let\footnote{We remark that our decomposition of $\delta$ into $\delta_1, \delta_2, \delta_3$ is arbitrary, and is intended to simplify the subsequent analysis. } $\delta_1 = \delta_2 = \delta/4$ and $\delta_3 = \delta/2$. 
We observe that
\begin{align*}
    &n_P \geq c' \cdot |\binsch| \log \left( \frac{|\binsch|}{\delta_1} \right)
        &&\implies      &&\bbP \left[ \Pbalance(\binsch, 2) = 1 \right] \geq 1 - \delta_1
        &&\text{by Lemma \ref{LEMMA:WELL}}\\
    &n_P \geq 2 |\binsch|
        &&\implies      && \bbP \left[ \Papprox(\binsch, \varepsilon_{\delta_2}) = 1 \right] \geq 1 - \delta_2
        &&\text{by Lemma \ref{LEMMA:FREE}}
\end{align*}
where $c' > 0$ is the universal constant that appears in Lemma \ref{LEMMA:WELL} and
\[
    \varepsilon_{\delta_2} = \sqrt{\frac{1}{2( \lfloor n/|\binsch| \rfloor - 1)} \log \left(\frac{2 |\binsch|}{\delta_2}\right) } + \frac{1}{\lfloor n/|\binsch| \rfloor}.
\]

Furthermore, assuming
\[
    n_P \geq \frac{27}{\min\{p_0, p_1\}} \log \left( \frac{8}{\delta_3} \right)
    \qquad\text{and}\qquad
    n_Q \geq \frac{27}{\min\{q_0, q_1\}} \log \left( \frac{8}{\delta_3} \right),
\]
we may define $\beta_{\delta_3}$ as a function of $n_P, n_Q$ and $\delta_3$ such that
\begin{equation}\label{eqn:beta}
    \beta_{\delta_3} = \beta_{\delta_3}(n_P, n_Q) := 1 + \sqrt{ \max \left\{ \frac{1}{n_P \cdot \min\{p_0, p_1\}}, ~\frac{1}{n_Q \cdot \min\{q_0, q_1\}} \right\} \cdot 27 \log \left( \frac{8}{\delta_3} \right) }.
\end{equation}
Then it follows from Corollary \ref{coro:ratio} that
\[
    \bbP\left( \Pratio(\Dtrain, \Dtest; \beta_0) =1 \right) \geq 1 - \delta_3.
\]

Observe that $\delta_1 = \frac{\delta}{4} < \frac{1}{4}$ and $|\binsch| \geq 4$, and thus, $\log \left( \frac{|\binsch|}{\delta_1} \right) \geq \log 16 \geq 2$. 
Let $c = c'$. 
Since $c' \geq 1$ and $\log \left( \frac{|\binsch|}{\delta_1} \right) \geq \log \left( \frac{16}{\delta} \right) = \log \left( \frac{8}{\delta_3} \right)$, we notice that
\begin{align*}
    &n_P \geq \max\left\{ c, ~ \frac{27}{\min\{p_0, p_1\}} \right\} \cdot |\binsch| \log \left( \frac{4|\binsch|}{\delta} \right)\\
    &\qquad\implies\qquad
    n_P \geq \max\left\{ c' \cdot |\binsch| \log \left( \frac{|\binsch|}{\delta_1} \right), ~ 2 |\binsch|, ~\frac{27}{\min\{p_0, p_1\}} \log \left( \frac{8}{\delta_3} \right) \right\}.
\end{align*}
In summary, we obtain that for any given $\delta \in (0,1)$,
\begin{equation}
    \begin{aligned}
        &n_P \geq \max\left\{ c, ~ \frac{27}{\min\{p_0, p_1\}} \right\} \cdot |\binsch| \log \left( \frac{4|\binsch|}{\delta} \right)
            \quad\text{and}\quad
            n_Q \geq \frac{27}{\min\{q_0, q_1\}} \log \left( \frac{16}{\delta} \right)\\
        &\qquad\implies\qquad
            \bbP \left[ \Pbalance(\binsch, 2) = 1 ~\&~ \Papprox(\binsch, \varepsilon_{\delta/4}) = 1 ~\&~ \Pratio(\Dtrain, \Dtest; \beta_{\delta/2}) =1 \right] \geq 1 - \delta.
    \end{aligned}
\end{equation}

Conditioned on the event $\Pbalance(\binsch, 2) = 1 ~\&~ \Papprox(\binsch, \varepsilon_{\delta/4}) = 1 ~\&~ \Pratio(\Dtrain, \Dtest; \beta_{\delta/2}) =1$,
\begin{align*}
    T_1 &\leq \left( \frac{\big| \rho_0 - \rho_1 \big|}{ \rho_0 + \rho_1 } \right)^2 \leq \left( \frac{ \beta_{\delta/2} - \frac{1}{\beta_{\delta/2}} }{ \beta_{\delta/2} + \frac{1}{\beta_{\delta/2}} } \right)^2 \leq \left( \beta_{\delta/2} - 1 \right)^2,
        &&\because \eqref{eqn:T1_upper}; \text{ also, see \eqref{eqn:Phi_ratio}}\\
    T_2 &\leq \frac{{\wmax}^3}{{\wmin}^2} \cdot R_P\big( \hat{h}_P; f \big)\\
        &\leq \frac{{\wmax}^3}{{\wmin}^2} \cdot  \left(\varepsilon_{\delta/4}^2 + \frac{2}{|\binsch|}\right).
        &&\because \text{proof of Theorem \ref{Thm:risk_bound}; Lemmas \ref{Lemma:REL} \& \ref{Lemma:GRP}}
\end{align*}
Note that if Assumption \ref{assump:smooth} holds, then we additionally have
\[
    T_2 \leq \frac{{\wmax}^3}{{\wmin}^2} \cdot  \left(\varepsilon_{\delta/4}^2 + \frac{8 K^2}{|\binsch|^2}\right).
\]
Inserting these upper bounds for $T_1$ and $T_2$ into \eqref{eqn:step1_master.T1}, \eqref{eqn:step1_master.T2} and recalling the expression for $\beta$ in \eqref{eqn:beta}, we complete the proof.
\end{proof}

\end{document}